\theoremstyle{plain}
\newtheorem{theorem}{Theorem}[section]
\newtheorem{proposition}[theorem]{Proposition}
\newtheorem{lemma}[theorem]{Lemma}
\theoremstyle{definition}
\newtheorem{definition}[theorem]{Definition}
\theoremstyle{remark}
\newcommand{\vertexSet}[0]{V}
\newcommand{\prob}[0]{P}
\newcommand{\expect}[0]{\mathds{E}}
\newcommand{\rand}[1]{\mathbf{#1}}
\newcommand{\embdingSpace}[0]{\mathcal{Z}}
\newcommand{\kernelSpace}[0]{\mathcal{F}}
\newcommand{\Hnorm}[1]{\|#1\|_{\kernelSpace}}
\newcommand{\kernel}[0]{\mathcal{K}}
\newcommand{\loss}[0]{\mathcal{L}}
\newcommand{\gnnModel}[0]{g}
\newcommand{\graphStruct}[0]{\mathcal{G}}
\newcommand{\neighbor}[1]{\mathrm{N}_{#1}}
\newcommand{\ego}[1]{G_{#1}}
\newcommand{\task}[0]{\mathcal{T}}
\newcommand{\cataForget}[0]{\textbf{CFR}}
\newcommand{\hypothesis}[0]{\mathcal{H}}
\newcommand{\jointDistCondVertSet}[2]{\prob(\mathbf{y}_{#1},\mathbf{G}_{#1}|#2)}
\newcommand{\jointDistCondVerGraph}[2]{\prob(\mathbf{y}, \mathbf{G}_{v}|\vertexSet_{#1}, \graphStruct_{\task_{#2}})}
\newcommand{\jointDistCondVerGraphInduced}[2]{\prob(\mathbf{y}, g(\mathbf{G}_{v})|\vertexSet_{#1}, \graphStruct_{\task_{#2}})}
\def\eqref#1{equation~\ref{#1}}
\def\1{\bm{1}}
\DeclareMathAlphabet{\mathsfit}{\encodingdefault}{\sfdefault}{m}{sl}
\SetMathAlphabet{\mathsfit}{bold}{\encodingdefault}{\sfdefault}{bx}{n}
\newcommand{\R}{\mathbb{R}}
\DeclareMathOperator*{\argmin}{arg\,min}
\begin{document}

\twocolumn[
\icmltitle{ Towards Robust Graph Incremental Learning on Evolving Graphs}



\icmlsetsymbol{equal}{*}

\begin{icmlauthorlist}
\icmlauthor{Junwei Su}{yyy}
\icmlauthor{Difan Zou}{yyy}
\icmlauthor{Zijun Zhang}{yyy2}
\icmlauthor{Chuan Wu}{yyy}
\end{icmlauthorlist}

\icmlaffiliation{yyy}{Department of Computer Science, University of Hong Kong}
\icmlaffiliation{yyy2}{Department of Computer Science, University of Wu Han}

\icmlcorrespondingauthor{Junwei Su}{junweisu@connect.hku.hk}

\icmlkeywords{Machine Learning, ICML}

\vskip 0.3in
]



\printAffiliationsAndNotice{}  

\begin{abstract}
Incremental learning is a machine learning approach that involves training a model on a sequence of tasks, rather than all tasks at once. This ability to learn incrementally from a stream of tasks is crucial for many real-world applications. However, incremental learning is a challenging problem on graph-structured data, as many graph-related problems involve prediction tasks for each individual node, known as Node-wise Graph Incremental Learning (NGIL). This introduces non-independent and non-identically distributed characteristics in the sample data generation process, making it difficult to maintain the performance of the model as new tasks are added. In this paper, we focus on the inductive NGIL problem, which accounts for the evolution of graph structure (structural shift) induced by emerging tasks. We provide a formal formulation and analysis of the problem, and propose a novel regularization-based technique called Structural-Shift-Risk-Mitigation (SSRM) to mitigate the impact of the structural shift on catastrophic forgetting of the inductive NGIL problem. We show that the structural shift can lead to a shift in the input distribution for the existing tasks, and further lead to an increased risk of catastrophic forgetting. Through comprehensive empirical studies with several benchmark datasets, we demonstrate that our proposed method, Structural-Shift-Risk-Mitigation (SSRM), is flexible and easy to adapt to improve the performance of state-of-the-art GNN incremental learning frameworks in the inductive setting. \let\thefootnote\relax\footnote{Implementation available at: \url{https://github.com/littleTown93/NGIL_Evolve}}
\end{abstract}

\section{Introduction}\label{sec:introduction}
 Humans are capable of acquiring new information continuously while retaining previously obtained knowledge.  This seemingly natural capability, however, is difficult but important for deep neural networks (DNNs) to acquire~\cite{wu2021incremental}.  Incremental learning, also known as continual learning or life-long learning, studies machine learning approaches that allow a model to continuously acquire new knowledge while retaining previously obtained knowledge. This is important because it allows the model to adapt to new information without forgetting past knowledge. In the general formulation of incremental learning, a stream of tasks arrives sequentially and the model goes through rounds of training sessions to accumulate knowledge for a particular objective (such as classification). The goal is to find a learning algorithm that can incrementally update the model's parameters based on the new task without suffering from {\em catastrophic forgetting}~\cite{cl_survey}, which refers to the inability to retain previously learned information when learning new tasks. Incremental learning is crucial for the practicality of machine learning systems as it allows the model to adapt to new information without the need for frequent retraining, which can be costly.

While incremental learning has been extensively studied for Euclidean data (such as images and text)~\cite{cl_nlp,cl_application}, there has been relatively little research on incremental learning for graph-structured data~\cite{zhang2022cglb}. Graphs are often generated continuously in real-life scenarios, making them an ideal application for incremental learning. For example, in a citation network, new papers and their associated citations may emerge, and a document classifier needs to continuously adapt its parameters to distinguish the documents of newly emerged research fields~\cite{zhou2021overcoming}. In social networks, the distributions of users' friendships and activities depend on when and where the networks are collected~\cite{gama2014survey}. In financial networks, the payment flows between transactions and the appearance of illicit transactions have strong correlations with external contextual factors such as time and market~\cite{zliobaite2010learning}. Therefore, it is important to develop Graph Incremental Learning (GIL) methods that can handle new tasks over newly emerged graph data while maintaining model performance, particularly for highly dynamic systems such as citation networks, online social networks, and financial networks. Graph neural networks (GNNs) are popular and effective tools for modelling graph and relational data structures~\cite{gnn_survey}, while the ability to incrementally learn from data streams is 
needed for building practical 
GNN-driven systems.

One of the major challenges in incremental learning on graph-structured data lies in the prediction of individual nodes, known as Node-wise GIL (NGIL). This type of problem introduces non-independent and non-identically distributed characteristics in the sample data generation process, making it difficult to maintain the performance of the model as new tasks are added. Existing research on NGIL has primarily focused on a {\em transductive} setting, where the graph structures among tasks are assumed to be independent~\cite{zhou2021overcoming,liu2021overcoming}. However, in many real-life scenarios, such as the examples 
above, it is inevitable that the emerging graph associated with the new task would expand on the existing graph, thereby altering the structural information of the existing vertices, 
i.e., structural shift. As the structural information of the graph is a crucial input for GNNs, an evolving graph structure can greatly affect the model's prediction and generalization abilities. This is referred to as the {\em inductive} NGIL problem. Fig.~\ref{fig:diff_ngil} provides a graphical illustration of the difference between transductive and inductive NGIL. To advance the practicality of the NGIL framework, it is important to study this inductive setting. However, currently, there is a lack of a clear problem formulation and theoretical understanding of the problem, making it challenging to develop effective solutions for inductive NGIL. 

In this paper, we delve into the complex inductive NGIL problem. We carefully formulate and rigorously analyze this problem, and propose a novel regularization-based technique to effectively mitigate the impact of the structural shift on catastrophic forgetting (as measured by retention of model performance in previous tasks) of the inductive NGIL problem. 
Our contributions are summarized as follows.

\begin{itemize}[ leftmargin=*]
   \item We present a mathematical formulation of the inductive NGIL problem that quantifies the effect of changes in the graph structure on the model's ability to generalize to previously seen tasks/vertices. This formulation provides a clear and detailed understanding of the problem and is essential for designing and evaluating effective solutions.

\item We conduct a formal analysis of the impact of the structural shift on catastrophic forgetting of the inductive NGIL problem. We show that structural shift can lead to a shift in the input distribution for existing tasks (Proposition~\ref{prop:imbalanced_observation}), and we derive a bound on catastrophic forgetting that indicates that the risk of forgetting is positively related to the extent of structural shift present in the inductive setting (Theorem~\ref{thm:forget_bound}). This sheds insight into developing algorithms for inductive NGIL.

\item Based on the analysis, we propose a novel regularization method utilizing the divergence minimization principle, which encourages the model to converge to a latent space where the difference of the distribution induced by the structural shift is minimised. We show that such an approach can reduce the risk of catastrophic forgetting (Thereom~\ref{thm:induced_cfr_bound}), and we term our proposed method as Structural-Shift-Risk-Mitigation (SSRM). SSRM is easy to implement and can be easily adapted to existing GNN incremental learning frameworks to boost their performance in the inductive setting.

\item We validate the predicted effect of structural shift on catastrophic forgetting and the effectiveness of our proposed SSRM method through comprehensive empirical studies. Our results show that SSRM can consistently improve the performance of state-of-the-art incremental learning frameworks on the inductive NGIL problem.
\end{itemize}

\begin{figure}[!t]
\centering
\includegraphics[width=0.5\textwidth]{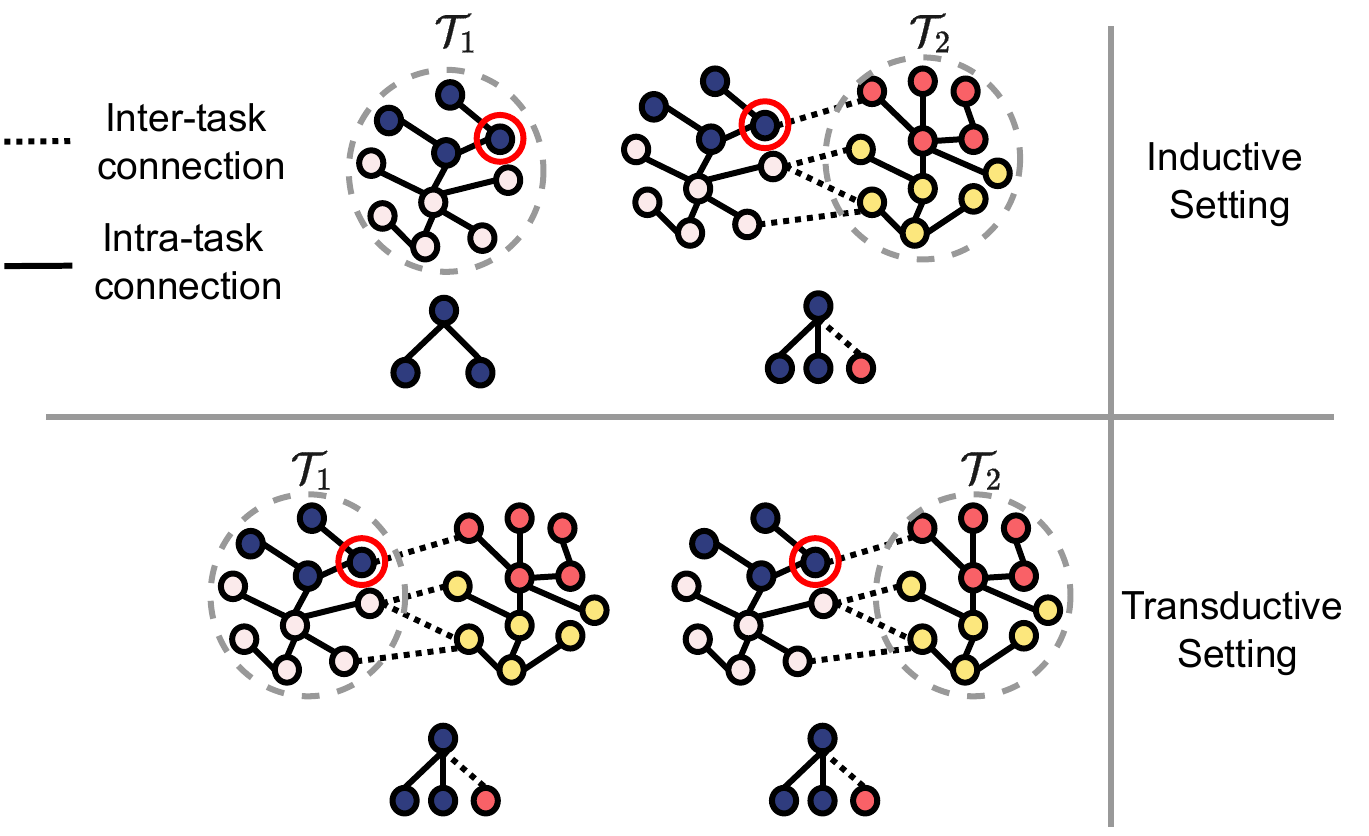}
\vspace{-3mm}
\caption{An illustration of the difference between transductive and inductive NGIL. $\task_1$ and $\task_2$ are two consecutive tasks. In the transductive setting, the 1-hop ego graph of the vertex with a red circle would remain the same, while in the inductive setting, the graph may change as new tasks are introduced and the overall graph structure evolves.}
\label{fig:diff_ngil}
\vspace{-0.3cm}
\end{figure}

\section{Related Work}\label{sec:related_work}
\subsection{Incremental Learning}
Incremental learning, also known as continual or lifelong learning, has gained increasing attention in recent years and has been extensively explored on Euclidean data. We refer readers to the surveys~\cite{cl_survey,cl_survey2,cl_nlp} for a more comprehensive review of these works. The primary challenge of incremental learning is to address the catastrophic forgetting problem, which refers to the drastic degradation in a model's performance on previous tasks after being trained on new tasks.

Existing approaches for addressing this problem can be broadly categorized into three types: regularization-based methods, experience-replay-based methods, and parameter-isolation-based methods. Regularization-based methods aim to maintain the model's performance on previous tasks by penalizing large changes in the model parameters~\cite{jung2016less,li2017learning,kirkpatrick2017overcoming,farajtabar2020orthogonal,saha2021gradient}. Parameter-isolation-based methods prevent drastic changes to the parameters that are important for previous tasks by continually introducing new parameters for new tasks ~\cite{rusu2016progressive,yoon2017lifelong,yoon2019scalable,wortsman2020supermasks,wu2019large}. Experience-replay-based methods select a set of representative data from previous tasks, which are used to retrain the model with the new task data to prevent forgetting ~\cite{lopez2017gradient,shin2017continual,aljundi2019gradient,caccia2020online,chrysakis2020online,knoblauch2020optimal}.

Our proposed method, SSRM, is a novel regularization-based technique that addresses the unique challenge of the structural shift in the inductive NGIL. Unlike existing regularization methods, which focus on minimizing the effect of updates from new tasks, SSRM aims to minimize the impact of the structural shift on the generalization of the model on the existing tasks by finding a latent space where the impact of the structural shift is minimized. This is achieved by minimizing the distance between the representations of vertices in the previous and current graph structures through the inclusion of a structural shift mitigation term in the learning objective. It is important to note that SSRM should not be used as a standalone method to overcome catastrophic forgetting but should be used as an additional regularizer to improve performance when there is a structural shift.

\subsection{Graph Incremental Learning}
Recently, there has been a surge of interest in GIL due to its practical significance in various applications~\cite{wang2022lifelong,xu2020graphsail,daruna2021continual,kou2020disentangle,ahrabian2021structure,cai2022multimodal,wang2020bridging,liu2021overcoming,zhang2021hierarchical,zhou2021overcoming,carta2021catastrophic,zhang2022cglb,kim2022dygrain,tan2022graph}. However, most existing works in NGIL focus on a transductive setting, where the entire graph structure is known before performing the task or where the inter-connection between different tasks is ignored. In contrast, the inductive NGIL problem, where the graph structure evolves as new tasks are introduced, is less explored and lacks a clear problem formulation and theoretical understanding. There is currently a gap in understanding the inductive NGIL problem, which our work aims to address. In this paper, we highlight the importance of addressing the structural shift and propose a novel regularization-based technique, SSRM, to mitigate the impact of the structural shift on the inductive NGIL problem. Our work lays down a solid foundation for future research in this area.

Finally, it is important to note that there is another area of research known as dynamic graph learning~\cite{galke2021lifelong,wang2020streaming,han2020graph,yu2018netwalk,nguyen2018continuous,ma2020streaming,feng2020incremental,bielak2022fildne}, which focuses on enabling GNNs to capture the changing graph structures. The goal of dynamic graph learning is to capture the temporal dynamics of the graph into the representation vectors, while having access to all previous information. In contrast, GIL addresses the problem of catastrophic forgetting, in which the model's performance on previous tasks degrades after learning new tasks. For evaluation, a dynamic graph learning algorithm is only tested on the latest data, while GIL models are also evaluated on past data. Therefore, dynamic graph learning and GIL are two independent research directions with different focuses and should be considered separately.

\section{Preliminary and Problem Formulation}\label{sec:problem_formulation}
In this section, we present the preliminary and formulation of the inductive NGIL problem. We use bold letters to denote random variables, while the corresponding realizations are represented with thin letters.

We assume the existence of a stream of training tasks $\task_1, \task_2,..., \task_m$, characterized by observed vertex batches $\vertexSet_1,\vertexSet_2,...,\vertexSet_m$ that are drawn from an unknown undirected graph $\graphStruct$. Each vertex $v$ is associated with a node feature $x_v$ and a target label $y_v$. The observed graph structure at training task $\task_i$ is induced by the accumulative vertices and given by $\graphStruct_{\task_i} = \graphStruct[\bigcup_{j=1}^i \vertexSet_j]$. In this setting, the graph structure is evolving as the learning progresses through different training tasks. Fig.~\ref{fig:ngil} provides a graphical illustration.  

To accommodate the nature of node-level learning tasks, in which the information used for inference is aggregated within the k-hop neighborhood of a node, we adopt a local view in the learning problem formulation. We define $\neighbor{k}(v)$ as the k-hop neighborhood of vertex $v$, and the nodes in $\neighbor{k}(v)$ form an ego-graph $\ego{v}$, which consists of a (local) node feature matrix $X_v = \{x_u |u \in \neighbor{k}(v)\}$ and a (local) adjacency matrix $A_v = \{a_{uw}|u,w \in \neighbor{k}(v)\}$. We use $\rand{\ego{v}}$ as a random variable of the ego-graph for the target vertex $v$, whose realization is $\ego{v} = (A_v,X_v)$.  
Let $\ego{\vertexSet} = \{\ego{v} |v \in \vertexSet \}$ denote the set of ego graphs associated with vertex set $\vertexSet$. Under this formulation, the ego-graph $\ego{v}$ can be viewed as the Markov blanket (containing all necessary information for the prediction problem) for the root vertex $v$. Therefore, we can see the prediction problem associated with data $\{(\ego{v},y_v)\}_{v \in \vertexSet_i}$ from training session $\task_i$ as drawn from an empirical joint distribution $\prob(\mathbf{y_{v}},\mathbf{G}_{v}|\vertexSet_i)$.

Let $\hypothesis$ denote the hypothesis space and $f \in \hypothesis$ be a classifier with $\hat{y}_v = f (G_v)$ and $\loss(.,.) \mapsto \R$ be a given loss function. We use $R_{\jointDistCondVertSet{v}{\vertexSet}}^{\loss}(f)$ to denote the generalization risk of the classifier $f$ with respect to $\loss$ and $\jointDistCondVertSet{v}{\vertexSet}$, and it is defined as follows:
 \begin{equation}\label{eq:gen_risk}
     R_{\jointDistCondVertSet{v}{\vertexSet}}^{\loss}(f) = \expect_{\jointDistCondVertSet{v}{\vertexSet}}[\loss(f(\ego{v}),y_v)].
 \end{equation}

{\bf Catastrophic Forgetting Risk.}
With the formulation above, the catastrophic forgetting risk ($\cataForget$) of a classifier $f$ after being trained on $\task_m$ can be characterized by the retention of performance on previous vertices, given by: 
\begin{equation}\label{eq:cf}
\begin{split}
      \cataForget(f) & := R_{\jointDistCondVertSet{v}{\vertexSet_1,...,\vertexSet_{m-1}}}^{\loss}(f)
\end{split}
\end{equation}
which translates to the retention of performance of the classifier $f$ from $\task_m$ on the previous tasks $\task_1,...,\task_{m-1}$.

\begin{figure*}[!t]
\centering
\includegraphics[width=0.85\textwidth]{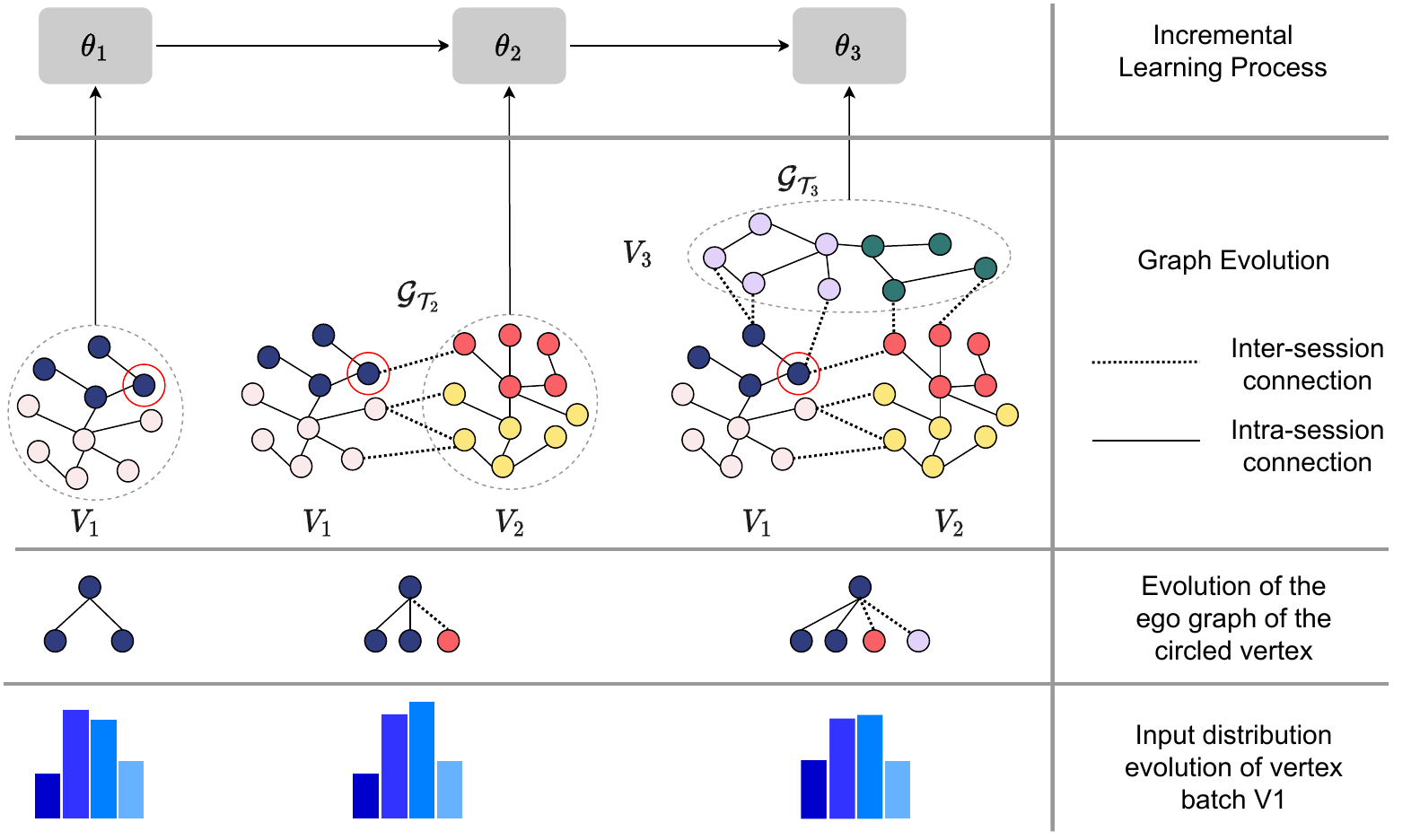}
\vspace{-2mm}
\caption{Illustration of the Progression in Inductive NGIL. Each task $i$ results in an update to the model's parameters from $\theta_{i-1}$ to $\theta_{i}$ using data from the new task. As new vertex batches associated with each task are introduced, the graph structure changes, potentially altering the input distribution of existing vertices through changes in their ego graphs.}
\label{fig:ngil}
\vspace{-3mm}
\end{figure*}

\section{Structural Shift and Catastrophic Forgetting}\label{sec:learnability}
In this section, we present our main results on the connection between the structural shift of the evolving graph structure and the catastrophic forgetting risk of the NGIL problem. We start by showing how the evolving graph structure can lead to a distributional shift in the underlying learning problem. We then derive a bound for catastrophic forgetting risk with respect to structural shift.

\subsection{Structural Shift}\label{subsec:structural_shift}
To illustrate the phenomenon of structural shift more concretely in NGIL, we consider a graph generation process with two communities (i.e., types of vertices): community 1 and community 2. The feature vectors $x_u$ of vertices $u$ in community 1 are drawn from a distribution $N_1$, and those of vertices in community 2 are drawn from another distribution $N_2$, where $\expect_{N_1}[x_u] \neq \expect_{N_2}[x_u]$. The connectivity probability for vertices of the same community is denoted as $p_{in}$, and the connectivity probability for vertices of different communities is denoted as $p_{out}$. Note the setting above is an instance of the commonly used graph generation model for node classification tasks, i.e., Contextual Stochastic Block Model (CSBM)~\cite{csbm}.

Consider an incremental learning setting consisting of two training tasks $\task_1, \task_2$ and associated observed vertex batches $\vertexSet_1,\vertexSet_2$. We define $C_{1}(\vertexSet)\mapsto \mathds{N}$ to be a function that counts the number of vertices in community 1 for a given vertex batch, and function $C_{2}(\vertexSet) \mapsto \mathds{N}$ is defined similarly. Consider a mean aggregation function that averages the node features of the 1-hop neighbors, i.e., $\text{mean-agg}(v) = \frac{1}{|\neighbor{1}(v)|} \sum_{u \in \neighbor{1}(v)} x_u $.

\begin{proposition}[Imbalanced Observation]\label{prop:imbalanced_observation}
    If  $\frac{C_1(V_1)}{C_2(V_1)} \neq \frac{C_1(V_2)}{C_2(V_2)},$
    then we have that 
    $\expect[\text{mean-agg}(v)|\graphStruct_{\task_1}] \neq \expect[\text{mean-agg}(v)|\graphStruct_{\task_2}],  \forall v \in V_1.$
\end{proposition}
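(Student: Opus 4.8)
The plan is to reduce each conditional expectation to a convex combination of the two community feature means and then show the mixing coefficients cannot coincide under the stated hypothesis. Write $\mu_1 = \expect_{N_1}[x_u]$ and $\mu_2 = \expect_{N_2}[x_u]$, so that by assumption $\mu_1 \neq \mu_2$ (they differ in at least one coordinate). Fix $v \in V_1$. Conditioning on $\graphStruct_{\task_i}$ restricts the $1$-hop neighbors of $v$ to the vertices of $\bigcup_{j\le i}V_j$, with edges and features still drawn from the CSBM; inside this expectation I would further condition on the neighborhood \emph{composition}, i.e. on the number $k_1$ of neighbors in community $1$ and the number $k_2$ in community $2$ (and on which specific vertices they are). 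Given this, the attached features are mutually independent with mean $\mu_1$ or $\mu_2$ according to community, so by linearity
\begin{equation*}
\expect[\text{mean-agg}(v)\mid \graphStruct_{\task_i}, k_1, k_2] \;=\; \frac{k_1\mu_1 + k_2\mu_2}{k_1+k_2} \;=\; \alpha\,\mu_1 + (1-\alpha)\,\mu_2,\qquad \alpha:=\frac{k_1}{k_1+k_2}.
\end{equation*}
Taking the outer expectation gives $\expect[\text{mean-agg}(v)\mid \graphStruct_{\task_i}] = \bar\alpha_i\,\mu_1 + (1-\bar\alpha_i)\,\mu_2$ with $\bar\alpha_i := \expect[\alpha\mid \graphStruct_{\task_i}]$. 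Hence the two conditional means are equal only if $(\bar\alpha_1 - \bar\alpha_2)(\mu_1 - \mu_2) = \vzero$, and since $\mu_1\neq\mu_2$ this forces $\bar\alpha_1 = \bar\alpha_2$; so it suffices to prove $\bar\alpha_1 \neq \bar\alpha_2$.

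Next I would evaluate $\bar\alpha_i$ from the CSBM edge model. For $v$ in community $1$, the candidate neighbors in $\graphStruct_{\task_1}$ are $C_1(V_1)-1$ vertices of community $1$ (each joined to $v$ independently w.p. $p_{in}$) and $C_2(V_1)$ vertices of community $2$ (each w.p. $p_{out}$); in $\graphStruct_{\task_2}$ these counts become $C_1(V_1)+C_1(V_2)-1$ and $C_2(V_1)+C_2(V_2)$. Working in the sparse/mean-field regime — equivalently using the Poisson approximation for the neighbor counts, under which $k_1\mid k_1+k_2$ is binomial with success probability $\expect k_1/(\expect k_1+\expect k_2)$, or equivalently normalizing the aggregation by the expected degree — one obtains the clean identity that $\bar\alpha_i$ equals the expected number of community-$1$ neighbors of $v$ divided by the expected total number of neighbors, so that
\begin{equation*}
\bar\alpha_1 \;=\; \frac{(C_1(V_1)-1)\,p_{in}}{(C_1(V_1)-1)\,p_{in} + C_2(V_1)\,p_{out}},
\end{equation*}
and $\bar\alpha_2$ is the same expression with $C_1(V_1)-1 \mapsto C_1(V_1)+C_1(V_2)-1$ and $C_2(V_1) \mapsto C_2(V_1)+C_2(V_2)$.

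Then I would run the algebra. Using $\frac{a}{a+b} = \frac{c}{c+d} \iff \frac{a}{b} = \frac{c}{d}$ for positive quantities, $\bar\alpha_1 = \bar\alpha_2$ is equivalent to $\frac{(C_1(V_1)-1)\,p_{in}}{C_2(V_1)\,p_{out}} = \frac{(C_1(V_1)+C_1(V_2)-1)\,p_{in}}{(C_2(V_1)+C_2(V_2))\,p_{out}}$; the strictly positive factors $p_{in},p_{out}$ cancel, leaving $\frac{C_1(V_1)-1}{C_2(V_1)} = \frac{C_1(V_1)+C_1(V_2)-1}{C_2(V_1)+C_2(V_2)}$, and the mediant identity $\frac{a}{b}=\frac{a+c}{b+d}\iff\frac{a}{b}=\frac{c}{d}$ gives $\frac{C_1(V_1)-1}{C_2(V_1)} = \frac{C_1(V_2)}{C_2(V_2)}$ — i.e., up to the $-1$ correction from excluding $v$ itself (which is removed by allowing a self-loop in the aggregation or by passing to the large-batch limit), exactly the negation of the hypothesis $\frac{C_1(V_1)}{C_2(V_1)} \neq \frac{C_1(V_2)}{C_2(V_2)}$. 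Thus $\bar\alpha_1 \neq \bar\alpha_2$ and the two conditional expectations differ. The case $v$ in community $2$ is handled identically after swapping the roles of $p_{in}$ and $p_{out}$ and tracking the community-$2$ proportion; the edge probabilities again cancel and the same ratio condition resurfaces, so the claim holds for every $v \in V_1$.

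The main obstacle is the treatment of the random normalization $1/|\neighbor{1}(v)|$: the identity $\expect[k_1/(k_1+k_2)] = \expect k_1/(\expect k_1 + \expect k_2)$ is exact only under a Poisson degree model (or with expected-degree normalization), so for genuine binomial neighbor counts one would instead need to argue that the map from the pair of Bernoulli parameters to $\expect[k_1/(k_1+k_2)]$ separates the two tasks. That point, together with the minor bookkeeping of the $-1$ from excluding the root vertex, is the only delicate part; the convexity reduction and the mediant-inequality algebra are routine.
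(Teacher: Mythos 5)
Your proposal follows essentially the same route as the paper's proof: both reduce $\expect[\text{mean-agg}(v)\mid\graphStruct_{\task_i}]$ to a mixture $\bar\alpha_i\,\mu_1+(1-\bar\alpha_i)\,\mu_2$ with $\bar\alpha_i$ taken to be the expected-proportion weight $\frac{C_1\,p_{in}}{C_1\,p_{in}+C_2\,p_{out}}$, and both conclude with the same algebra $\frac{a}{a+b}\neq\frac{a+c}{a+b+c+d}\iff\frac{a}{b}\neq\frac{c}{d}$, after which $p_{in}$ and $p_{out}$ cancel against the community-ratio hypothesis. The two delicate points you flag --- replacing $\expect[k_1/(k_1+k_2)]$ by $\expect[k_1]/(\expect[k_1]+\expect[k_2])$, and the $-1$ from excluding the root vertex --- are precisely the steps the paper performs silently (it writes the conditional expectation directly as the expected-proportion mixture using $C_1(V_1)$ rather than $C_1(V_1)-1$), so your version is, if anything, the more careful of the two.
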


The proof of Proposition~\ref{prop:imbalanced_observation} can be found in Appendix~\ref{appendix:prop_proof}. Proposition~\ref{prop:imbalanced_observation} indicates that if the ratio between the number of vertices from the two communities is different, then the appearance of $\task_2$ would alter the expected input of vertices from $\task_1$ in the new graph. Proposition~\ref{prop:imbalanced_observation} highlights the fact that while the underlying mechanism for generating features and connectivity may remain the same across tasks, imbalanced observations can still cause a shift in the input distribution of ego graphs. Furthermore, as the graph evolves, changes in observed properties such as node features and connectivity can also alter the properties of ego graphs of existing vertices, resulting in distributional differences between tasks. This dependency between the input distribution (ego graphs) of vertices $\vertexSet_i$ and the observed graph structure $\graphStruct_{\task_j}$ of the training session $\task_j$, i.e. $ \prob(\mathbf{G}_{v}|\vertexSet_i, \graphStruct_{\task_j})$, poses a unique risk of catastrophic forgetting for the model. Not only must the model retain information for existing data, but it must also continually adapt to changing input distributions of existing data as the graph evolves. 

In this paper, we focus on the effect of structural shift on catastrophic forgetting and assume that the labelling rule is the same for vertices in different sessions, i.e., $\prob(\mathbf{y}|\mathbf{G}_{v}, \task_i) = \prob(\mathbf{y}|\mathbf{G}_{v}, \task_j)$, $\forall i,j$. In addition, we gauge our analysis toward the case of two training tasks, referred to as the NGIL-2 problem. The NGIL-2 problem is characterized by three distributions: $\prob(\mathbf{y}, \mathbf{G}_{v}|\vertexSet_1, \graphStruct_{\task_1})$, $\prob(\mathbf{y}, \mathbf{G}_{v}|\vertexSet_1, \graphStruct_{\task_2})$, and $\prob(\mathbf{y}, \mathbf{G}_{v}|\vertexSet_2, \graphStruct_{\task_2})$, which are the distributions of $\vertexSet_1$ in graphs $\graphStruct_{\task_1}$ and $\graphStruct_{\task_2}$, and the empirical distribution of $\vertexSet_2$ in graph $\graphStruct_{\task_2}$. The analysis can be easily extended to multiple training tasks by recursively applying the analysis and treating consecutive tasks as a joint task.

\subsection{Structural Shift on Catastrophic Forgetting Risk}

\begin{definition}[Maximum Mean Discrepancy]\label{def:mmd}
Let $\kernelSpace$ be a reproducing kernel Hilbert space (RKHS) with kernel $\kernel$
and norm $\Hnorm{.}$. Then the Maximum Mean Discrepancy (MMD) between distribution $\prob_1$ and $\prob_2$ is defined as:
$$d_{\mathrm{MMD}} (\prob_1,\prob_2) := \sup_{f \in \kernelSpace: \Hnorm{f} \leq 1} \expect_{\prob_1}[f(x)] - \expect_{\prob_2}[f(x)]$$
\end{definition}
MMD is a distance metric between distributions that leverages kernel embedding and has been commonly used to quantify the difference between two distributions~\cite{gretton2012kernel}. MMD admits efficient estimation, as well as fast convergence properties, which are of chief importance in our analysis and the proposed method.

In this paper, we use MMD distance to characterize the effect of the emerging vertex batch on the input distribution of the previous vertex batch. As discussed in subsection~\ref{subsec:structural_shift}, if the structural shift between the vertex batches is large, the appearance of the new vertex batch will significantly change the input distribution of the previous vertex batch, leading to a larger MMD distance. Next, we use MMD to formalize the relation between catastrophic forgetting risk and structural shift.

 \begin{theorem}[CFR Bound]\label{thm:forget_bound}
 Let $\hypothesis = \{ f \in \kernelSpace_{\kernel} : \Hnorm{f} \leq 1\}$ where $\kernelSpace_{\kernel} $ is a RKHS with its associated kernel $\kernel$.
Let $ \jointDistCondVerGraph{1}{1}$, $ \jointDistCondVerGraph{1}{2}$ and $ \jointDistCondVerGraph{2}{2}$ be the three distributions that characterize a NGIL-2 problem. Then for a given loss function $\loss^q(a,b)$ of the form $|a - b|^q$ , for every $h \in \mathcal{H}$, we have
\begin{equation*}
\begin{split}
        \cataForget(h) & \leq R_{\jointDistCondVerGraph{2}{2}}^{\loss^q}(h)  \\
    &+ \underbrace{ 2 * d_{\mathrm{MMD}}(\prob(\rand{G}_{v}|\vertexSet_1, 
  \graphStruct_{\task_1}), \prob(\rand{G}_{v}|\vertexSet_1, 
 \graphStruct_{\task_2}))}_{\text{Structural Shift}} \\
  &+ \underbrace{  d_{\mathrm{MMD}}(\prob(\rand{G}_{v}|\vertexSet_1, 
  \graphStruct_{\task_1}), \prob(\rand{G}_{v}|\vertexSet_2, 
 \graphStruct_{\task_2}))}_{\text{Structural Shift}}
    + \lambda,
\end{split}
\end{equation*}
where $$ \lambda = \min_{h \in \mathcal{H}}   R_{\jointDistCondVerGraph{1}{2}}^{\loss^q}(h)+R_{\jointDistCondVerGraph{2}{2}}^{\loss^q}(h).$$
\end{theorem}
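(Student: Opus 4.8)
The plan is to adapt the classical domain-adaptation risk bound (source risk $+$ domain discrepancy $+$ ideal joint risk) to the NGIL-2 setting, routing everything through the \emph{pivot} distribution $P_1 := \jointDistCondVerGraph{1}{1}$, which sits between the ``forgetting'' distribution $P_1' := \jointDistCondVerGraph{1}{2}$ (so that $\cataForget(h) = R_{P_1'}(h)$, where I abbreviate $R_P(h) := R_P^{\loss^q}(h)$) and the current-task distribution $P_2 := \jointDistCondVerGraph{2}{2}$. I also write $R_P(h,h') := \expect_P[\loss^q(h(\rand{G}_v),h'(\rand{G}_v))]$ for the disagreement risk, which depends on $P$ only through its $\rand{G}_v$-marginal $\mu_P$, and set $h^\star := \argmin_{h\in\hypothesis} R_{P_1'}(h) + R_{P_2}(h)$, so that $\lambda = R_{P_1'}(h^\star) + R_{P_2}(h^\star)$.

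\textbf{Step 1 (MMD transfer estimate).} First I would establish that, for any two of the distributions above and any $h,h'\in\hypothesis$,
\[
|R_P(h) - R_Q(h)| \ \le\ d_{\mathrm{MMD}}(\mu_P,\mu_Q), \qquad |R_P(h,h') - R_Q(h,h')| \ \le\ d_{\mathrm{MMD}}(\mu_P,\mu_Q).
\]
For the first inequality one invokes the standing assumption that the labeling conditional $\prob(\mathbf{y}\mid\rand{G}_v)$ is shared across tasks, which lets one write $R_P(h) = \expect_{\rand{G}_v\sim\mu_P}[\bar\ell_h(\rand{G}_v)]$ with $\bar\ell_h(G_v) := \expect[\loss^q(h(G_v),\mathbf{y})\mid\rand{G}_v=G_v]$; the two expectations then differ by at most $d_{\mathrm{MMD}}(\mu_P,\mu_Q)$ \emph{provided} $\bar\ell_h$ (respectively $G_v\mapsto\loss^q(h(G_v),h'(G_v))$) lies in the unit ball of $\kernelSpace_{\kernel}$. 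Proving this membership/norm control — through the reproducing property, boundedness of $\kernel$, the constraint $\Hnorm{f}\le 1$ that defines $\hypothesis$, and the explicit form $\loss^q(a,b)=|a-b|^q$ — is the technical heart of the argument and the step I expect to require the most care; it may only hold up to an absolute constant (apparently normalized to $1$ in the statement) and tacitly wants $q$ in the subadditive range $(0,1]$, which is also what makes $\loss^q$ satisfy the triangle inequality $\loss^q(a,c)\le\loss^q(a,b)+\loss^q(b,c)$ used next.

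\textbf{Step 2 (chaining through the pivot).} Granting Step 1, the bound follows from a short chain of transfers and triangle inequalities: (i) transfer the risk of $h$ from $P_1'$ to $P_1$, paying one structural-shift term $d_{\mathrm{MMD}}(\mu_{P_1},\mu_{P_1'}) = d_{\mathrm{MMD}}(\prob(\rand{G}_v\mid\vertexSet_1,\graphStruct_{\task_1}),\prob(\rand{G}_v\mid\vertexSet_1,\graphStruct_{\task_2}))$; (ii) apply $R_{P_1}(h)\le R_{P_1}(h^\star) + R_{P_1}(h,h^\star)$; (iii) transfer $R_{P_1}(h^\star)$ back to $P_1'$ — this \emph{second} use of the structural-shift term is exactly what produces the factor $2$ — and transfer the disagreement risk $R_{P_1}(h,h^\star)$ from $P_1$ to $P_2$, paying $d_{\mathrm{MMD}}(\mu_{P_1},\mu_{P_2}) = d_{\mathrm{MMD}}(\prob(\rand{G}_v\mid\vertexSet_1,\graphStruct_{\task_1}),\prob(\rand{G}_v\mid\vertexSet_2,\graphStruct_{\task_2}))$; (iv) apply $R_{P_2}(h,h^\star)\le R_{P_2}(h) + R_{P_2}(h^\star)$ and recognize $R_{P_1'}(h^\star)+R_{P_2}(h^\star)=\lambda$. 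Collecting the terms reproduces exactly the claimed inequality. The extension to $m>2$ tasks is obtained, as the authors note, by treating $\task_1,\dots,\task_{m-1}$ as a single joint task and recursing.
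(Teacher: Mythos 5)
Your overall architecture is the paper's architecture: a domain-adaptation-style bound obtained from the loss triangle inequality through an auxiliary hypothesis plus MMD transfers between the three NGIL-2 distributions, with the disagreement-risk lemma (Lemma 35 of the DA survey, reproduced as the paper's Lemma B.1) as the engine. Your bookkeeping even reproduces the statement exactly as written (including the third term between $\prob(\rand{G}_v|\vertexSet_1,\graphStruct_{\task_1})$ and $\prob(\rand{G}_v|\vertexSet_2,\graphStruct_{\task_2})$), and your choice of $h^\star$ as the \emph{joint} minimizer of $R_{\jointDistCondVerGraph{1}{2}}^{\loss^q}+R_{\jointDistCondVerGraph{2}{2}}^{\loss^q}$ is cleaner than the paper's $h^*_2$ (a minimizer of the second risk alone), since it makes the $\lambda$ term collect exactly.

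The genuine gap is the first half of your Step 1. The inequality $|R_P^{\loss^q}(h)-R_Q^{\loss^q}(h)|\le d_{\mathrm{MMD}}(\mu_P,\mu_Q)$ for the \emph{labelled} risk is not what the available lemma gives and does not follow from the premises: the lemma only controls disagreement risks $R_P^{\loss^q}(h,h')$, whose integrand is built from two members of the unit ball $\hypothesis$, whereas your labelled version needs the conditional expected loss $\bar\ell_h(G_v)=\expect[\loss^q(h(G_v),\mathbf{y})\mid G_v]$ to lie in the unit ball of $\kernelSpace_{\kernel}$ — a condition on the unknown label conditional, not a consequence of $\Hnorm{h}\le 1$. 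You flag this as the technical heart, but your chain genuinely relies on it twice (steps (i) and (iii), i.e., both occurrences of the $\vertexSet_1$ structural-shift term), so as proposed the argument is incomplete. The paper's proof is arranged precisely to avoid this: it applies the loss triangle inequality at the forgetting distribution $\jointDistCondVerGraph{1}{2}$ itself, $R(h)\le R(h^*)+R(h,h^*)$, and then only ever moves \emph{disagreement} risks across distributions via the lemma. The repair for your route is the same re-arrangement: bound $\cataForget(h)\le R_{\jointDistCondVerGraph{1}{2}}^{\loss^q}(h^\star)+R_{\jointDistCondVerGraph{1}{2}}^{\loss^q}(h,h^\star)$, transfer the disagreement term to the marginal under $\graphStruct_{\task_1}$ and then to that of $\vertexSet_2$ under $\graphStruct_{\task_2}$ (two applications of the lemma), and finish with the triangle inequality under $\jointDistCondVerGraph{2}{2}$; this uses only the disagreement-risk lemma, keeps your pivot and your $h^\star$, and in fact yields the first structural-shift term with coefficient $1$ rather than $2$, which still implies the stated bound since MMD is nonnegative. (Your remark that $q\in(0,1]$ is tacitly needed for the pointwise triangle inequality applies equally to the paper's proof, so it is not a point of difference.)
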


The proof of Theorem~\ref{thm:forget_bound} can be found in Appendix~\ref{appendix:theorem1_proof}. Theorem~\ref{thm:forget_bound} provides a formal bound on the catastrophic forgetting risk (as measured by the retention of performance on the previous task) in inductive NGIL. The first term $R_{\jointDistCondVerGraph{2}{2}}^{\loss^q}(h)$ represents the model's performance on the newest task. The second term, $d_{\mathrm{MMD}}(\prob(\rand{G}_{v}|\vertexSet_1, 
  \graphStruct_{\task_1}), \prob(\rand{G}_{v}|\vertexSet_1, 
 \graphStruct_{\task_2}))$, captures the structural shift induced by the emergence of the new task, while the third term, $d_{\mathrm{MMD}}(\prob(\rand{G}_{v}|\vertexSet_1, 
  \graphStruct_{\task_1}), \prob(\rand{G}_{v}|\vertexSet_2, 
 \graphStruct_{\task_2}))$, captures the structural shift induced by the difference between the new and old tasks.  This theorem formalizes the effect of structural shift on the catastrophic forgetting risk in NGIL and can be used to analyze and develop NGIL frameworks.

\section{ Structural Shift Risk Mitigation}\label{sec:methodology}
As shown in the previous section, structural shift plays a significant role in causing catastrophic forgetting risks in NGIL problems. In order to mitigate this issue, we propose a method based on divergence minimization. This involves encouraging the model to converge to a latent space that is invariant to the distributional shift caused by the evolving graph structure. To achieve this, we modify the learning objective to minimize the distance between the representation of the vertices in the previous graph structure and the new graph structure. This is done through the inclusion of terms that capture the distance between the representations of vertices in different sessions and 
regularize the model.

Here, we consider a GNN as the backbone for incremental learning, where the GNN serves as an embedding function that combines the graph structure and node features to learn a representation vector for each node. This representation is then passed to a task-specific decoder function for prediction. Formally, let $\embdingSpace$ be the embedding space for the learned representation of vertices, and $\hypothesis_{g}$ represent the hypothesis space of the embedding functions defined by the given GNN model with varying parameters. Let $\gnnModel: G_{v} \mapsto \embdingSpace$ be a specific instance of a GNN model within the hypothesis space, that maps a vertex $v \in \mathcal{V}$ to a representation vector $z_v$ in the embedding space $\embdingSpace$. Similarly, let $\hypothesis_{f}$ denote the hypothesis space of prediction functions, and $f: \embdingSpace \mapsto y_v$ be a specific instance of a prediction function that maps an embedding vector $z_v$ to a label $y_v$ in the label set. Then, an NGIL framework with GNN as the backbone and the aforementioned principle is equivalent to the following learning objective:
\begin{equation}\label{eq:learning_objective}
\begin{split}
      &  \min_{g \in \hypothesis_{g}, f \in \hypothesis_{f}} 
 \underbrace{\loss(f(g(G_{\vertexSet_i})), y_{\vertexSet_i})}_{\text{Training Loss}} 
    + \\
    & \underbrace{\alpha\cdot\sum_{j=1}^{i-1} d_{\mathrm{MMD}}(\prob(g(\mathbf{G}_{\vertexSet_j})|\graphStruct_{\task_{i-1}}),\prob(g(\mathbf{G}_{\vertexSet_j})|\graphStruct_{\task_{i}}))}_{\text{Structural Shift Mitigation}}  \\
    & + \underbrace{ \beta\cdot d_{\mathrm{MMD}}(\prob(g(\mathbf{G}_{\vertexSet_j})|\graphStruct_{\task_{i-1}}),\prob(g(\mathbf{G}_{\vertexSet_i})|\graphStruct_{\task_{i}}))}_{\text{Structural Shift Mitigation}}
\end{split}
\end{equation}
where $\loss(f(g(G_{\vertexSet_i})), y_{\vertexSet_i})$ is the training loss for measuring the prediction performance of the given model on the newest task, and $\alpha, \beta$ are the hyper-parameter that control the regularization effect.  The second and third term captures the distance between the representations of vertices in the previous and current graph structures and serves to encourage the model to converge to a latent space that is invariant to the distributional shift caused by the evolving graph structure. We term ~\eqref{eq:learning_objective} as structural shift risk mitigation, SSRM.

 \begin{theorem}[Induced CFR Bound]\label{thm:induced_cfr_bound}
 Let $g$ be a given GNN model that maps vertices to the embedding space. Then for a given loss function $\loss^q(a,b)$ of the form $|a - b|^q$ and every prediction function $f \in \hypothesis_f$, we have that
\begin{equation*}
\begin{split}
        \cataForget & (f|g)  \leq R_{\jointDistCondVerGraphInduced{2}{2}}^{\loss}(f|g)  \\
    &+ 2*d_{\mathrm{MMD}}(\prob(g(\rand{G}_{v})|\vertexSet_1, 
  \graphStruct_{\task_1}), \prob(g(\rand{G}_{v})|\vertexSet_1, 
 \graphStruct_{\task_2}))\\
  &+  d_{\mathrm{MMD}}(\prob(g(\rand{G}_{v})|\vertexSet_1, 
  \graphStruct_{\task_1}), \prob(g(\rand{G}_{v})|\vertexSet_2, 
 \graphStruct_{\task_2}))
    + \lambda',
\end{split}
\end{equation*}
where $\cataForget(f|g)$ is the catastrophic forgetting bound given a fixed GNN $g$ and
$$\lambda' = \min_{f \in \hypothesis_f}   R_{\jointDistCondVerGraphInduced{1}{2}}^{\loss^q}(h)+R_{\jointDistCondVerGraphInduced{2}{2}}^{\loss^q}(f) $$
\end{theorem}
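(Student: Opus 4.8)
The plan is to obtain Theorem~\ref{thm:induced_cfr_bound} as a more-or-less direct corollary of Theorem~\ref{thm:forget_bound}, by noting that fixing the GNN embedding $g$ reduces the inductive NGIL-2 problem to an ordinary NGIL-2 problem whose samples are the pushed-forward ego-graphs $g(\rand{G}_v)$ rather than the raw ego-graphs $\rand{G}_v$. First I would make precise the reduction: given the fixed map $g: G_v \mapsto \embdingSpace$, each of the three defining distributions $\prob(\rand{y},\rand{G}_v|\vertexSet_k,\graphStruct_{\task_l})$ induces a pushforward distribution $\prob(\rand{y},g(\rand{G}_v)|\vertexSet_k,\graphStruct_{\task_l})$ on $\outputSpace \times \embdingSpace$; since the labelling rule is invariant across sessions (the standing assumption $\prob(\rand{y}|\rand{G}_v,\task_i)=\prob(\rand{y}|\rand{G}_v,\task_j)$), these three pushforwards constitute a valid instance of an NGIL-2 problem on the embedding space. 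The catastrophic forgetting risk $\cataForget(f|g)$ and the risks $R^{\loss^q}_{\prob(\rand{y},g(\rand{G}_v)|\cdot,\cdot)}(f|g)$ are then literally the NGIL-2 quantities for this induced problem with hypothesis class $\hypothesis_f$ acting on $\embdingSpace$.

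The second step is to invoke Theorem~\ref{thm:forget_bound} verbatim on this induced problem. The theorem's proof (in the appendix) only uses: (i) a triangle-type inequality for the $\loss^q$-risk across the three distributions, and (ii) the bound $|\expect_{\prob_1}[h] - \expect_{\prob_2}[h]| \le d_{\mathrm{MMD}}(\prob_1,\prob_2)$ for $h$ in the unit ball of the RKHS, applied to the functions $\loss^q(f(\cdot),y)$ and their ilk. Both ingredients are agnostic to whether the input variable is $\rand{G}_v$ or $g(\rand{G}_v)$ — the only thing that changes is which space the kernel $\kernel$ and the RKHS $\kernelSpace_\kernel$ live on. Applying Theorem~\ref{thm:forget_bound} to the induced problem therefore yields exactly the claimed inequality, with the MMD terms now taken between the pushforward marginals $\prob(g(\rand{G}_v)|\vertexSet_1,\graphStruct_{\task_1})$, $\prob(g(\rand{G}_v)|\vertexSet_1,\graphStruct_{\task_2})$, $\prob(g(\rand{G}_v)|\vertexSet_2,\graphStruct_{\task_2})$, and with $\lambda'$ being precisely the induced-problem analogue of $\lambda$, i.e. $\lambda' = \min_{f\in\hypothesis_f} R^{\loss^q}_{\prob(\rand{y},g(\rand{G}_v)|\vertexSet_1,\graphStruct_{\task_2})}(f) + R^{\loss^q}_{\prob(\rand{y},g(\rand{G}_v)|\vertexSet_2,\graphStruct_{\task_2})}(f)$.

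The one point requiring care — and the main obstacle — is the interplay between the hypothesis class and the kernel after pushing forward. Theorem~\ref{thm:forget_bound} assumes $\hypothesis = \{f \in \kernelSpace_\kernel : \Hnorm{f}\le 1\}$, i.e. the MMD's witness class and the predictor class coincide. To apply it cleanly to the induced problem I need $\hypothesis_f$ to sit inside the unit ball of an RKHS $\kernelSpace_{\kernel'}$ on $\embdingSpace$, and the MMD in the statement must be the MMD with respect to that same $\kernel'$. I would handle this by stating the result with the (implicit) assumption that $\hypothesis_f$ is the unit ball of such an RKHS on the embedding space — consistent with the framing in Section~\ref{sec:methodology} where $d_{\mathrm{MMD}}$ is computed on embeddings — so that the composition $z \mapsto \loss^q(f(z),y) = |f(z)-y|^q$ is controlled by $\Hnorm{f}$ just as in the proof of Theorem~\ref{thm:forget_bound}. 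With that alignment in place the argument is a one-line specialization; without it one would additionally need a contraction/boundedness lemma relating $\|\loss^q(f(\cdot),y)\|$ back to $\Hnorm{f}$, which is the only genuinely technical piece and is already implicitly dispatched inside the appendix proof of Theorem~\ref{thm:forget_bound}. I would therefore cross-reference that lemma rather than reprove it.
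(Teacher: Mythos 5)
Your proposal matches the paper's own proof: Appendix~\ref{appendix:theorem2_proof} fixes $g$, treats the pushforward distributions $\prob(\mathbf{y}, g(\mathbf{G}_v)|\vertexSet_k,\graphStruct_{\task_l})$ as an NGIL-2 problem on the latent space, and reruns the decomposition and MMD bound from the proof of Theorem~\ref{thm:forget_bound} verbatim. Your remark about needing $\hypothesis_f$ to be (contained in) the unit ball of an RKHS on $\embdingSpace$ with the MMD taken in that same kernel is a point the paper leaves implicit, so flagging it is a fair and correct refinement rather than a departure.
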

The proof of Theorem~\ref{thm:induced_cfr_bound} can be found in Appendix~\ref{appendix:theorem2_proof}. Theorem~\ref{thm:induced_cfr_bound} can be interpreted in a similar way as Theorem~\ref{thm:forget_bound}. However, instead of operating on the input space, Theorem~\ref{thm:induced_cfr_bound} operates on the latent space learned by the GNN model. The theorem suggests that by encouraging the GNN model to converge to a latent space that reduces the distance induced by structural shift, we can decrease the risk of catastrophic forgetting of the prediction function. This validates our proposed method as given in ~\eqref{eq:learning_objective}.

In practice, we can estimate MMD by comparing the square distance between the empirical kernel mean embedding, as shown in ~\eqref{eq:emb_mmd}.
\begin{equation}\label{eq:emb_mmd}
    \begin{split}
        & \widehat{d}_{\mathrm{MMD}}^2 (X,Y) = \frac{1}{n_1^2} \sum_{i}^{n_1} \sum_{j}^{n_1} \kernel(x_i,x_j) \\
        & +\frac{1}{n_2^2} \sum_{i}^{n_2} \sum_{j}^{n_2} \kernel(y_i,y_j) - \frac{2}{n_1 n_2} \sum_{i}^{n_2} \sum_{j}^{n_1} \kernel(x_j,y_i),
    \end{split}
\end{equation}
 where $n_1,n_2$ represent the number of samples from the two distributions $X,Y$, respectively, and $\kernel$ is the chosen kernel function. In our experiment, we use the Gaussian kernel~\citep{dziugaite2015training} with $\kernel(x,y) = \sum_{\alpha_i} e^{-\alpha_i ||x-y||_2}$ ($\alpha_i = 1, 0.1, 0.01$). The overall procedure and graphical illustration of our proposed method can be found in Appendix~\ref{appendix: procedure}.

\section{Experiment}\label{sec:evaluation}
In this section, we present the experimental results of our proposed method, SSRM, on several NGIL benchmark datasets. We first describe the datasets and experimental set-up, followed by the results and analysis of our proposed method when being applied to the state-of-the-art incremental learning methods in inductive NGIL. Due to space limitations, we provide a more comprehensive description of the datasets, experiment set-up and additional results in Appendix~\ref{appendix:exp_details} and~\ref{appendix:additional_exp_result}. 
\begin{table}[h!]
  \centering
     \vspace{-3mm}
  \caption{Incremental learning settings for each dataset.
  }  
  {\small
   \setlength\tabcolsep{4pt}
    \begin{tabular}{c|ccc}
    \toprule
        Datasets  &        OGB-Arxiv  & Reddit & CoraFull          \\
            
    \midrule
           \# vertices & 169,343 & 227,853 & 19,793\\
           \# edges   & 1,166,243 & 114,615,892 & 130,622 \\
           \# class & 40 & 40 & 70\\
    \midrule
         \# tasks & 20 & 20 & 35\\
         \# vertices / \# task & 8,467 & 11,393 & 660\\
         \# edges / \# task & 58,312 & 5,730,794 & 4,354\\
    \bottomrule
    \end{tabular}%
   }
   \vspace{-3mm}
  \label{tab:data_description}%
\end{table}%

\begin{figure}[t!]
\subfigure[Evolution of Task 1 Performance]{
\includegraphics[width=.46\textwidth]{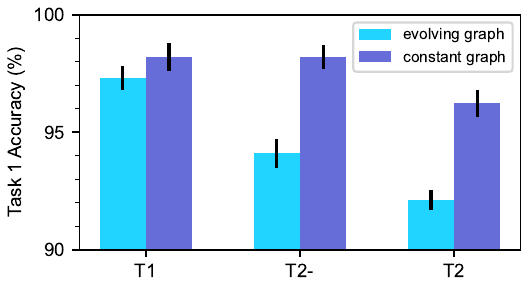}
\label{fig:structural_dependency}
}
\subfigure[Arxiv, Inductive]{
\includegraphics[width=.24\textwidth]{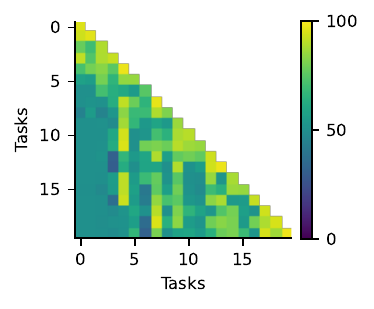}
}\hspace{-4mm}
\subfigure[Arxiv, Transductive]{
\includegraphics[width=.24\textwidth]{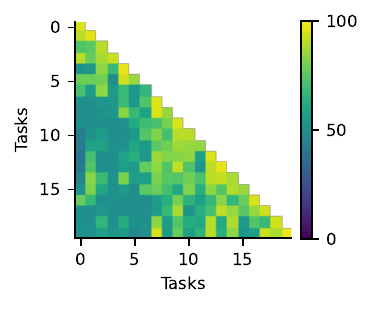}
}
 \caption{Learning Dynamics of Bare Model on Arxiv in Transductive and Inductive Settings. (a) captures the change of model performance of task 1 when transitioning into task 2 in the inductive and transductive settings. (b) and (c) are the complete performance matrix ($x,y$-axis are the $i,j$ in $r_{i,j}$ correspondingly) of Bare model in the inductive and transductive settings.}
\label{fig:diff}
\end{figure}

\paragraph{Datasets and Experimental Set-up.} We evaluate our proposed method, SSRM, on OGB-Arxiv~\citep{ogb}, Reddit~\citep{reddit}, and CoraFull~\citep{cora_full}. The experimental set-up follows the widely adopted task-incremental-learning (task-IL)~\cite{zhang2022cglb}, where a k-class classification task is extracted from the dataset for each training session. For example, in OGB-Arxiv dataset, which has 40 classes, we divide them into 20 tasks: Task 1 is a 2-class classification task between classes 0 and 1, task 2 is between classes 2 and 3, and so on. In each task, the system only has access to the graph induced by the vertices at the current and earlier learning stages, following the formulation in Sec.~\ref{sec:problem_formulation}. A brief description of the datasets, and how they are divided into different node classification tasks is given in Table~\ref{tab:data_description}. We adopt the implementation  from the GIL recent benchmark~\cite{zhang2022cglb} for creating the task-IL setting and closely follow their set-up (such as the train/valid/test set split).


\paragraph{ Incremental Learning Frameworks.}
In our experiments, we evaluate the effectiveness of our proposed method on two state-of-art NGIL frameworks, Experience Replay GNN (ER-GNN)~\cite{liu2021overcoming} and Topology-ware Weight Preserving (TWP)~\cite{zhou2021overcoming}. ER-GNN stores a small set of representative nodes and replays them with new tasks, while TWP penalizes the update of model parameters to preserve the topological information of previous graphs. Additionally, we also include Gradient Episodic Memory (GEM) as a state-of-the-art incremental learning framework from the DNN field. GEM operates in a similar fashion as ER-GNN, but with a different strategy for selecting representative data. In addition to these incremental learning frameworks, we also include two natural baselines for NGIL: the Bare model and Joint Training. The Bare model denotes the backbone GNN without any continual learning techniques, serving as the lower bound on continual learning performance. On the other hand, Joint Training trains the backbone GNN on all tasks simultaneously, resulting in no forgetting problems and serving as the upper bound for continual learning performance.

\paragraph{ Evaluation Metric.}
Let $r_{i,j}$ denote the performance (e.g., accuracy) on task $j$ after the model has been trained over a sequence of tasks from $1$ to $i$. Then, the forgetting of task $j$ after being trained over a sequence of tasks from $1$ to $i$ is measured by $r_{i,j}-r_{j,j}$. To better understand the dynamics of the overall performance while learning about the task sequence, we are interested in the average performance sequence (APS) $:= \{\frac{\sum_{j}^ir_{i,j}}{i}|i=1,...,m\}$ and the average forgetting sequence (AFS) $:= \{\frac{\sum_{j}^ir_{i,j}-r_{j,j}}{i}|i=1,...,m-1\}$. We use the final average performance (FAP) $:= \frac{\sum_{j}^m r_{m,j}}{m}$ and the final average forgetting (FAF) $:= \frac{\sum_{j}^m r_{m,j}-r_{j,j}}{m}$ to measure the overall effectiveness of an NGIL framework.

\begin{figure}[!t]
\vspace{-1mm}
\centering
\subfigure[Arxiv, ER-GNN ]{
\includegraphics[width=.235\textwidth]{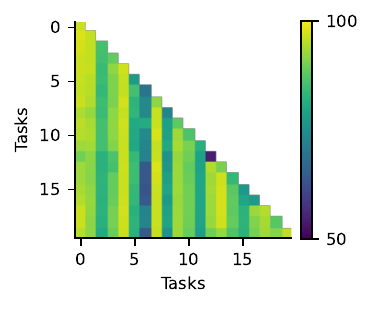}
}\hspace{-4mm}
\subfigure[Arxiv, ER-GNN-SSRM]{
\includegraphics[width=.235\textwidth]{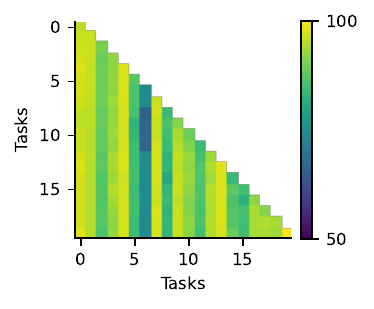}
}
\subfigure[Arxiv, Learning Curve]{
\includegraphics[width=.46\textwidth]{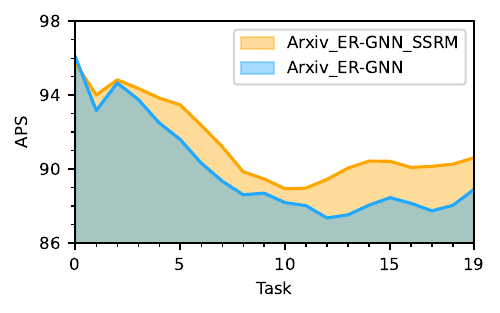}
\label{fig:learning_dynamic}
}
\caption{Learning Dynamic of ER-GNN on Arxiv w/w.o.~SSRM. (a) and (b) are the complete performance matrix ($x,y$-axis are $i,j$ in $r_{i,j}$ correspondingly) of ER-GNN on Arxiv w./w.o. SSRM. (c) is the learning curve of the two settings illustrating that SSRM leads to a higher APS for each task. }
\vspace{-2mm}
\label{fig:dynamic}
\end{figure}

\subsection{Results}
\begin{table*}[h!]
  \centering
  \caption{Performance comparison of existing NGIL frameworks w/w.o. SSRM ($\uparrow$ higher means better). Results are averaged among five trials. We use $\alpha=0.1,\beta=0.5$ for SSRM. {\bf Bold letter} with * indicates that the entry admits an improvement with SSRM.
  }  
  {
   \setlength\tabcolsep{4pt}
    \begin{tabular}{|c|cc|cc|cc|}
    \hline
    
        Dataset  &        \multicolumn{2}{c}{Arixv-CL}         & \multicolumn{2}{c}{CoraFull-CL} & \multicolumn{2}{c}{Reddit-CL} \\
    \hline
        Performance Metric& FAP (\%) $\uparrow$ & FAF (\%) $\uparrow$ & FAP (\%) $\uparrow$ & FAF (\%) $\uparrow$ & FAP(\%) $\uparrow$ & FAF (\%) $\uparrow$\\
    \hline
    Bare model & 55.9 $\pm$ 1.2 & -33.4 $\pm$ 2.3 & 58.2 $\pm$ 3.6 & -33.7  $\pm$ 3.3 & 68.6 $\pm$ 4.8 & -23.9 $\pm$ 5.7 \\
    Joint Training & 92.5 $\pm$ 0.6 & N.A.  & 94.4 $\pm$ 0.4 & N.A.   & 98.3 $\pm$ 1.2 & N.A. \\
    \hline
       GEM &   76.6 $\pm$ 1.3 &  -4.1 $\pm$ 1.4&  88.6 $\pm$ 1.1 &  -3.8 $\pm$ 0.7   &  78.8 $\pm$ 7.5 &  -17.7 $\pm$  5.6\\
       GEM-SSRM &   {\bf 80.4}* $\pm$ 1.7 &  {\bf -3.1}* $\pm$ 1.1&   {\bf 91.8}* $\pm$ 1.2 &  -3.8 $\pm$ 0.8   &  {\bf 81.5}* $\pm$ 1.5 &   {\bf -15.2}* $\pm$  1.4\\
    \hline
        ER-GNN & 86.5 $\pm$ 0.5 & -10.7 $\pm$ 0.6  & 93.7 $\pm$ 1.0 & -3.8 $\pm$ 0.4  & 95.1 $\pm$ 3.3 & -1.9. $\pm$ 0.3 \\
        ER-GNN-SSRM & {\bf 91.2}* $\pm$ 0.7 & {\bf -8.7}* $\pm$ 0.7 & {\bf 94.3}* $\pm$ 0.8 & -4.0 $\pm$ 1.9  & {\bf 97.5}* $\pm$ 0.4 & {\bf -1.8}* $\pm$ 0.2\\
   \hline
        TWP  & 86.6 $\pm$ 0.9 & -5.6 $\pm$ 0.8 & 88.1 $\pm$ 0.9 & -4.2  $\pm$ 0.9 & 89.3 $\pm$ 1.2 & -8.2 $\pm$ 1.3\\
        TWP-SSRM & {\bf 88.5}* $\pm$ 0.8 & -7.2 $\pm$0.9 & {\bf 90.2}* $\pm$ 0.7  & {\bf -3.5}* $\pm$ 0.5 & {\bf 92.5}* $\pm$ 1.7 & {\bf -7.1}* $\pm$ 1.6\\
    \hline
    \end{tabular}%
   }
  \label{tab:improvement}%
\end{table*}%
\paragraph{ Difference in Transductive and Inductive.}
We first show the difference in task-IL between a transductive setting and an inductive setting. We do so by comparing the learning dynamic (change of $r_{i,j}$ for different $i,j$) of the Bare model. The result of the Arxiv dataset is illustrated in Fig.~\ref{fig:diff}. In Fig.~\ref{fig:structural_dependency}, $T_1$ and $T_2$ on the x-axis denote the first and the second tasks (training sessions), respectively, and $T_2-$ represents the state when Task 2 has arrived but the model has not been trained on Task 2 data yet. As shown in the figure, in the case of an evolving graph (inductive), the model performance on Task 1 experiences two drops when entering stage 2: when vertices of Task 2 are added to the graph (structural shift) and when the model is trained on Task 2's data. The bottom half of Fig.~\ref{fig:diff} is the complete performance matrix (with $r_{i,j}$ as entries) for the two settings. The colour density difference of the two performance matrices illustrates that the existing tasks suffer more catastrophic forgetting in the inductive setting as more tasks arrive. 

\begin{figure}[!t]
\vspace{-2mm}
\centering
\subfigure[Varying $\alpha$ with $\beta=0$]{
\includegraphics[width=.23\textwidth]{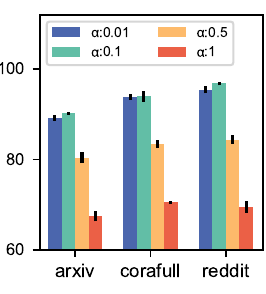}
}\hspace{-2.5mm}
\subfigure[Varying $\beta$ with $\alpha=0$]{
\includegraphics[width=.23 \textwidth]{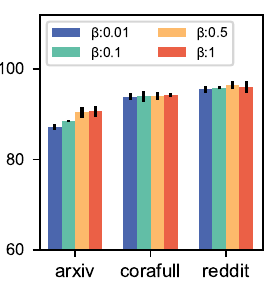}
}
\caption{Parameter sensitivity of SSRM. x-axis are the different datasets and the y-axis is FAP. The results are average of five trials. (a) captures the model performance (FAP) when varying $\alpha$ with $\beta = 0$. (b) captures the model performance (FAP) when varying $\beta$ with $\alpha = 0$. $\alpha$ and $\beta$ are the two hyperparameters used in ~\eqref{eq:learning_objective} to control the regularization effect.}
\vspace{-3mm}
\label{fig:parameter}
\end{figure}
 \paragraph{ Effectiveness of SSRM.} In Table ~\ref{tab:improvement}, we show the FAP and FAF of each method after learning the entire task sequence. On average, the Bare model without any continual learning technique performs the worst, and Joint training performs the best. FAF is inapplicable to joint-trained models because they do not follow the continual learning setting and are simultaneously trained on all tasks. In terms of FAP, methods with SSRM consistently admit an improvement with a significant margin. On the other hand, some methods do experience a slight FAF drop after applying SSRM. The reason is that SSRM simultaneously allows for a better forward transfer (transferring information from existing tasks to new tasks) and hence leads to higher performance for each new task (i.e., larger $r_{i,i}$ for each task i), as illustrated in the performance matrices and learning curve in Fig.~\ref{fig:dynamic}.

\subsection{Ablation Study}
In SSRM, there are two main hyperparameters: $\alpha$ and $\beta$ (see \eqref{eq:learning_objective} for more details). In Fig.~\ref{fig:parameter}, the performance of SSRM with different $\alpha$ and $\beta$ in $[0,1]$ is reported. The results show that the model performance is blunt to the change of $\beta$ when $\alpha = 0$. On the other hand, if $\alpha$ is too large, the representations learnt by the GNN lose expressive power over different tasks and hence the performance would decrease.

\section{Conclusion and Discussion}
This paper presents a mathematical formulation of the inductive NGIL problem and provides a comprehensive analysis of the catastrophic forgetting problem in this setting. Based on the analysis, we propose a novel method, SSRM, that addresses the problem of the structural shift in inductive NGIL by minimizing the divergence between the representations of vertices in different tasks and graphs. Our method is simple to implement and can be integrated with existing incremental learning frameworks. Empirical evaluations on benchmark datasets for NGIL show that our method significantly improves the performance of state-of-the-art NGIL frameworks in the inductive setting. 

This research opens up new opportunities for addressing the catastrophic forgetting problem in NGIL and we hope it lays down a solid foundation for further 
research in the area. The current formulation assumes a clear task boundary. In future work, one working direction is to extend our analysis and apply SSRM to other GIL settings such as link prediction in dynamic graphs where there is no clear task boundary.

\section*{Acknowledgement}
We would like to thank the anonymous reviewers and area chairs for their helpful comments.
This work was supported in part by the following grants: 1) Hong Kong RGC under the contracts HKU 17207621 and 17203522, 2) National Natural Science Foundation of China (62206203) and 3) Fundamental Research Funds for the Central Universities (2042022kf1034).
\bibliography{example_paper}
\bibliographystyle{icml2023}

\newpage
\appendix
\onecolumn

\section{Appendix: Proof for Proposition~\ref{prop:imbalanced_observation}}\label{appendix:prop_proof}
\begin{proof}
First recall that we are considering an incremental learning setting consisting of two training tasks $\task_1, \task_2$ and associated observed vertex batches $\vertexSet_1,\vertexSet_2$. We define $C_{1}(\vertexSet)\mapsto \mathds{N}$ to be a function that counts the number of vertices in community 1 for a given vertex batch, and similarly defined for $C_{2}(\vertexSet) \mapsto \mathds{N}$. Consider a mean aggregation function that averages the node features of the 1-hop neighbors, i.e., $\text{mean-agg}(v) = \frac{1}{|\neighbor{1}(v)|} \sum_{u \in \neighbor{1}(v)} x_u $.

Under this setup, we have the expected input of vertex of community 1 in task 1 as follows.
\begin{equation}
\expect[\text{mean-agg}(v)|\graphStruct_{\task_1}] = \frac{C_1(V_1)p_{in}}{C_1(V_1)p_{in} + p_{out} C_2(V_1)} \mu_1 + \frac{C_2(V_1)p_{out}}{C_1(V_1)p_{in} + p_{out} C_2(V_1)} \mu_2
\end{equation}
Similarly, we have the expected input of the vertex of community 1 in task 2 as follows.
\begin{equation}
\expect[\text{mean-agg}(v)|\graphStruct_{\task_2}] = \frac{C_1(V)p_{in}}{C_1(V)p_{in} + p_{out} C_2(V)} \mu_1 + \frac{C_2(V)p_{out}}{C_1(V)p_{in} + p_{out} C_2(V)} \mu_2
\end{equation}
where $V = V_1 + V_2$.  For simplicity, let's denote $ a= C_1(V_1)p_{in}, b = p_{out} C_2(V), c = C_1(V_2)p_{in}, d = C_2(V_2)p_{out}$. 

\begin{equation}
\begin{split}
    &\expect[\text{mean-agg}(v)|\graphStruct_{\task_1}] \neq \expect[\text{mean-agg}(v)|\graphStruct_{\task_2}]\\
    &\Leftrightarrow \frac{a}{a+b} \neq \frac{a+c}{a+b+c+d}\\
    &\Leftrightarrow \frac{a}{b} \neq \frac{c}{d}
\end{split}
\end{equation}
Substitute back $ a= C_1(V_1)p_{in}, b = p_{out} C_2(V), c = C_1(V_2)p_{in}, d = C_2(V_2)p_{out}$, we obtain the result of the proposition
\end{proof}

\section{Appendix: Proof for Theorem~\ref{thm:forget_bound}}\label{appendix:theorem1_proof}
In this section, we provide proof for Theorem~\ref{thm:forget_bound}.

We use the following Lemma 35 from~\cite{da_survey}.
\begin{lemma}\label{lemma:mmd_bound}
Let $\hypothesis = \{ f \in \mathcal{H}_k : \Hnorm{f} \leq 1\}$ where $\mathcal{H}_k$ is a RKHS with its associated kernel $k$.
Let $\prob_1, \prob_2$ be two arbitrary distributions that share the same input and output space as $\mathcal{H}_k$. Then for a given loss function $\loss^q(a,b)$ of the form $|a - b|^q$ , for every $h,h' \in \mathcal{H}$, we have
$$R_{\prob_1}^{\loss^q}(h,h') \leq R_{\prob_2}^{\loss^q}(h,h') + d_{\mathrm{MMD}}(\prob_{X1},\prob_{X2}),$$
where $\prob_{X1},\prob_{X2}$ are the input distribution of $\prob_1, \prob_2$ and $R_{\prob}^{\loss^q}(h,h') = \expect_{x \sim \prob_{X}}[\loss^q(h(x),h'(x))]$. 
\end{lemma}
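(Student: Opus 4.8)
The plan is to reduce the entire statement to a single application of the definition of MMD (Definition~\ref{def:mmd}), by recognizing that the gap between the two risks is exactly the gap between the expectations of \emph{one} fixed scalar function under the two input marginals. First I would write, purely by unfolding the definition of $R^{\loss^q}$ together with the form $\loss^q(a,b)=|a-b|^q$,
\begin{equation*}
R_{\prob_1}^{\loss^q}(h,h') - R_{\prob_2}^{\loss^q}(h,h') = \expect_{x\sim\prob_{X1}}[\phi(x)] - \expect_{x\sim\prob_{X2}}[\phi(x)], \qquad \phi(x) := |h(x)-h'(x)|^q.
\end{equation*}
The output variable plays no role here: since $\loss^q$ depends only on the two hypothesis outputs, the integrand is a deterministic function of the input $x$ alone, so only the input marginals $\prob_{X1},\prob_{X2}$ enter the two expectations. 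This step is purely definitional.

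The second step is to exhibit $\phi$ as an admissible witness in the supremum defining $d_{\mathrm{MMD}}(\prob_{X1},\prob_{X2})$. By Definition~\ref{def:mmd}, for every $f\in\kernelSpace$ with $\Hnorm{f}\le 1$ we have $\expect_{\prob_{X1}}[f]-\expect_{\prob_{X2}}[f]\le d_{\mathrm{MMD}}(\prob_{X1},\prob_{X2})$. Hence, once I establish that $\phi$ itself lies in the unit ball $\{f\in\kernelSpace:\Hnorm{f}\le 1\}$, the inequality $\expect_{\prob_{X1}}[\phi]-\expect_{\prob_{X2}}[\phi]\le d_{\mathrm{MMD}}(\prob_{X1},\prob_{X2})$ is immediate, and combining it with the identity of the first step yields the claimed $R_{\prob_1}^{\loss^q}(h,h') \le R_{\prob_2}^{\loss^q}(h,h') + d_{\mathrm{MMD}}(\prob_{X1},\prob_{X2})$. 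Note the bound is one-sided exactly because the MMD in Definition~\ref{def:mmd} is a supremum of a signed difference, which is all that is needed.

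The hard part is precisely the membership claim $\phi\in\{f:\Hnorm{f}\le 1\}$. The difference $h-h'$ lies in $\kernelSpace$ with $\Hnorm{h-h'}\le 2$ by the triangle inequality, but the pointwise map $t\mapsto|t|^q$ need not preserve RKHS membership, and absolute values or powers of kernel sections are not in general themselves in $\kernelSpace$. The way I would close this is to invoke the standing structural hypothesis under which this lemma is stated in its source, namely that the loss-composed hypotheses form a subset of the RKHS unit ball: the class $\hypothesis$ and the loss $\loss^q$ are taken compatible with the kernel $\kernel$ so that $x\mapsto\loss^q(h(x),h'(x))\in\{f\in\kernelSpace:\Hnorm{f}\le 1\}$ for all $h,h'\in\hypothesis$. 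This is the standard assumption in MMD-based domain-adaptation bounds, and it is exactly what singles out MMD (rather than some larger integral probability metric) as the right discrepancy; under it the second step applies verbatim. If one prefers not to normalize, the same argument delivers the bound with $\Hnorm{\phi}$ as an explicit multiplicative constant in front of the MMD term, which is the only place where the compatibility assumption is actually used.
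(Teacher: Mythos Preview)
The paper does not actually prove this lemma: it is introduced verbatim as ``Lemma~35 from~\cite{da_survey}'' and used as a black box in the proof of Theorem~\ref{thm:forget_bound}. So there is no paper proof to compare against.

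That said, your argument is precisely the standard route used in the cited domain-adaptation literature: rewrite the risk gap as $\expect_{\prob_{X1}}[\phi]-\expect_{\prob_{X2}}[\phi]$ for $\phi(x)=|h(x)-h'(x)|^q$, and then bound this by $d_{\mathrm{MMD}}$ by exhibiting $\phi$ as a witness in the unit ball. You correctly flag that the only nontrivial step is the membership $\phi\in\{f\in\kernelSpace:\Hnorm{f}\le1\}$, which does not follow from $h,h'\in\hypothesis$ alone and must be imposed as a compatibility assumption between the kernel and the loss; this is exactly the hypothesis under which the result is stated in the source, so your handling of it is appropriate.
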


\begin{proof}
Let $\hypothesis = \{ f \in \mathcal{H}_k : \Hnorm{f} \leq 1\}$ where $\mathcal{H}_k$ is a RKHS with its associated kernel $k$.
Let $ \jointDistCondVerGraph{1}{1}$, $ \jointDistCondVerGraph{1}{2}$ and $ \jointDistCondVerGraph{2}{2}$ be the three distributions that characterize a NGIL-2 problem. Let $\loss^q(a,b)$ be a loss function as given in the premise of the theorem. For simplicity, we denote $\prob_1 = \jointDistCondVerGraph{1}{1}, \prob_2 = \jointDistCondVerGraph{1}{2}, \prob = \jointDistCondVerGraph{2}{2}$.

Let's denote the optimal function in the hypothesis space as,
$$h^*_2 = \argmin_{h \in \hypothesis}  R_{\prob_2}^{\loss^q}(h)$$
$$\lambda = \min_{h \in \hypothesis}  R_{\prob_3}^{\loss^q}(h)+ R_{\prob_2}^{\loss^q}(h)$$

Then, we have that $\forall h \in \hypothesis$,
\begin{equation}
\begin{split}
     R_{\prob_2}^{\loss^q}(h) & \leq R_{\prob_2}^{\loss^q}(h^*_2) + R_{\prob_2}^{\loss^q}(h,h^*_2)\\
     & =  R_{\prob_2}^{\loss^q}(h^*_2) + R_{\prob_2}^{\loss^q}(h,h^*_2) + R_{\prob_1}^{\loss^q}(h,h^*_2) - R_{\prob_1}^{\loss^q}(h,h^*_2) + R_{\prob_3}^{\loss^q}(h,h^*_2) - R_{\prob_3}^{\loss^q}(h,h^*_2) +  R_{\prob_2}^{\loss^q}(h,h^*_2) -  R_{\prob_2}^{\loss^q}(h,h^*_2) \\
     &=  R_{\prob_2}^{\loss^q}(h^*_2) + [R_{\prob_2}^{\loss^q}(h,h^*_2) - R_{\prob_1}^{\loss^q}(h,h^*_2)] + [R_{\prob_2}^{\loss^q}(h,h^*_2) - R_{\prob_3}^{\loss^q}(h,h^*_2)] + R_{\prob_1}^{\loss^q}(h,h^*_2)  + R_{\prob_3}^{\loss^q}(h,h^*_2)  - R_{\prob_2}^{\loss^q}(h,h^*_2)
\end{split}
\end{equation}

Applying Lemma~\ref{lemma:mmd_bound} on $\prob_1, \prob_2$ we have that 
\begin{equation}
\begin{split}
        & R_{\prob_2}^{\loss^q}(h,h') \leq R_{\prob_1}^{\loss^q}(h,h') + d_{\mathrm{MMD}}(\prob_{X1},\prob_{X2}) \\
        & \Leftrightarrow R_{\prob_2}^{\loss^q}(h,h') - R_{\prob_1}^{\loss^q}(h,h') \leq d_{\mathrm{MMD}}(\prob_{X1},\prob_{X2})
\end{split}
\end{equation}
Similarly, applying Lemma~\ref{lemma:mmd_bound} on $\prob_2, \prob_3$ we have that 
\begin{equation}
  R_{\prob_2}^{\loss^q}(h,h') - R_{\prob_3}^{\loss^q}(h,h') \leq d_{\mathrm{MMD}}(\prob_{X2},\prob_{X3})
\end{equation}
Substitute back to the inequality above, we have that
\begin{equation}
\begin{split}
     R_{\prob_2}^{\loss^q}(h) & \leq R_{\prob_2}^{\loss^q}(h^*_2) + R_{\prob_2}^{\loss^q}(h,h^*_2)\\
     &\leq  R_{\prob_2}^{\loss^q}(h^*_2) + [R_{\prob_2}^{\loss^q}(h,h^*_2) - R_{\prob_1}^{\loss^q}(h,h^*_2)] + [R_{\prob_2}^{\loss^q}(h,h^*_2) - R_{\prob_3}^{\loss^q}(h,h^*_2)] + R_{\prob_1}^{\loss^q}(h,h^*_2)  + R_{\prob_3}^{\loss^q}(h,h^*_2)  - R_{\prob_2}^{\loss^q}(h,h^*_2)\\
     &\leq R_{\prob_2}^{\loss^q}(h^*_2) + 2d_{\mathrm{MMD}}(\prob_{X1},\prob_{X2}) + d_{\mathrm{MMD}}(\prob_{X2},\prob_{X3}) +  R_{\prob_3}^{\loss^q}(h,h^*_2)  \\
     &\leq R_{\prob_2}^{\loss^q}(h^*_2) + 2d_{\mathrm{MMD}}(\prob_{X1},\prob_{X2}) + d_{\mathrm{MMD}}(\prob_{X2},\prob_{X3}) +  R_{\prob_3}^{\loss^q}(h) + R_{\prob_3}^{\loss^q}(h^*_2) \\
     & = R_{\prob_3}^{\loss^q}(h)  + 2d_{\mathrm{MMD}}(\prob_{X1},\prob_{X2}) + d_{\mathrm{MMD}}(\prob_{X2},\prob_{X3}) +  R_{\prob_3}^{\loss^q}(h) + R_{\prob_3}^{\loss^q}(h^*_2) +R_{\prob_2}^{\loss^q}(h^*_2) \\
     &\leq R_{\prob_3}^{\loss^q}(h)  + 2d_{\mathrm{MMD}}(\prob_{X1},\prob_{X2}) + d_{\mathrm{MMD}}(\prob_{X2},\prob_{X3}) +  R_{\prob_3}^{\loss^q}(h) + R_{\prob_3}^{\loss^q}(h^*_2) + \lambda\\
\end{split}
\end{equation}
Substitute $\prob_1 = \jointDistCondVerGraph{1}{1}, \prob_2 = \jointDistCondVerGraph{1}{2}, \prob = \jointDistCondVerGraph{2}{2}$ back to the equation above, we got 

\begin{equation}
\begin{split}
            \cataForget(h) & \leq R_{\jointDistCondVerGraph{2}{2}}^{\loss^q}(h)  
    +  2d_{\mathrm{MMD}}(\prob(\rand{G}_{v}|\vertexSet_1, 
  \graphStruct_{\task_1}), \prob(\rand{G}_{v}|\vertexSet_1, 
 \graphStruct_{\task_2}))
  +  d_{\mathrm{MMD}}(\prob(\rand{G}_{v}|\vertexSet_1, 
  \graphStruct_{\task_1}), \prob(\rand{G}_{v}|\vertexSet_2, 
 \graphStruct_{\task_2}))
    + \lambda.
\end{split}
\end{equation}

\end{proof}
\section{Proof of Theorem~\ref{thm:induced_cfr_bound}}\label{appendix:theorem2_proof}
In this section, we provide the proof for Theorem~\ref{thm:induced_cfr_bound}.

\begin{proof}
Let $\hypothesis_f$ be the hypothesis space for the prediction head and $g$ be a given GNN that maps a vertex into an embedding space $\mathcal{Z}$. Then, a given $g$ induces a distribution in the latent space $\mathcal{Z}$ for the prediction function $f \in \hypothesis_f$. For simplicity, let's denote the induced distribution as, $\prob_1 = R_{\jointDistCondVerGraphInduced{1}{1}}, \prob_2 = \jointDistCondVerGraphInduced{1}{2}, \prob_3 = \jointDistCondVerGraphInduced{2}{2}$.

Let's denote the optimal function in the hypothesis space as,

$$f^*_2 = \argmin_{h \in \hypothesis_f}  R_{\prob_2}^{\loss^q}(f)$$
$$\lambda' = \min_{f \in \hypothesis_f}  R_{\prob_3}^{\loss^q}(f)+ R_{\prob_2}^{\loss^q}(f)$$

Then, we have that $\forall f \in \hypothesis_f$,
\begin{equation}
\begin{split}
     R_{\prob_2}^{\loss^q}(f) & \leq R_{\prob_2}^{\loss^q}(f^*_2) + R_{\prob_2}^{\loss^q}(f,f^*_2)\\
     & =  R_{\prob_2}^{\loss^q}(f^*_2) + R_{\prob_2}^{\loss^q}(f,f^*_2) + R_{\prob_1}^{\loss^q}(f,f^*_2) - R_{\prob_1}^{\loss^q}(f,f^*_2) + R_{\prob_3}^{\loss^q}(f,f^*_2) - R_{\prob_3}^{\loss^q}(f,f^*_2) +  R_{\prob_2}^{\loss^q}(f,f^*_2) -  R_{\prob_2}^{\loss^q}(f,f^*_2) \\
     &=  R_{\prob_2}^{\loss^q}(h^*_2) + [R_{\prob_2}^{\loss^q}(f,f^*_2) - R_{\prob_1}^{\loss^q}(f,f^*_2)] + [R_{\prob_2}^{\loss^q}(f,f^*_2) - R_{\prob_3}^{\loss^q}(f,f^*_2)] + R_{\prob_1}^{\loss^q}(f,f^*_2)  + R_{\prob_3}^{\loss^q}(f,f^*_2)  - R_{\prob_2}^{\loss^q}(f,f^*_2)
\end{split}
\end{equation}

Then, following the same procedure in the proof of Theorem~\ref{thm:forget_bound}, we can get

\begin{equation}
\begin{split}
             \cataForget & (f|g)  \leq R_{\jointDistCondVerGraphInduced{2}{2}}^{\loss}(f|g)  
    +  2*d_{\text{MMD}}(\prob(g(\rand{G}_{v})|\vertexSet_1, 
  \graphStruct_{\task_1}), \prob(g(\rand{G}_{v})|\vertexSet_1, 
 \graphStruct_{\task_2})) \\
  &+  d_{\text{MMD}}(\prob(g(\rand{G}_{v})|\vertexSet_1, 
  \graphStruct_{\task_1}), \prob(g(\rand{G}_{v})|\vertexSet_2, 
 \graphStruct_{\task_2}))
    + \lambda',
\end{split}
\end{equation}

\end{proof}
\section{Appendix:Overall Procedure}\label{appendix: procedure}
The overall learning procedure in each stage is summarized in Algorithm~\ref{alg:train} and Fig.~\ref{fig:train} provide a graphical illustration of the procedure. It involves training the GNN model on the current task while also minimizing the distance between the representations of vertices in the previous and current graph structures through the inclusion of the structural shift mitigation term. This results in a model that is able to maintain good performance on previous tasks while also learning the current task effectively, leading to improved generalization performance.

\begin{algorithm}[!h]
  \caption{Learning Procedure for $\task_i$}
  \label{alg:train}
\begin{algorithmic}
  \STATE {\bfseries Input:} $V_i$  \hspace{8mm} //new coming vertex set for $\task_i$\\
  \REQUIRE $f,g$ \hspace{2mm} //current model parameter \\ 
  \REQUIRE $I_{\task_{i-1}}$ \hspace{5mm} //a set of vertices sampled form $V_1,...,V_{i-1}$ \\
  \REQUIRE $\beta$, $\alpha$ \hspace{5mm} //hyper parameter for controling the regularization effect \\
  Repeat until convergence: \\
  Compute the representation for $I$ before and after $V_i$
  $$Z_{\mathrm{bef}} = g(I_{\task_{i-1}},\theta_g|\graphStruct_{\task_{i-1}}),\quad \mathcal{Z}_{\mathrm{aft}} = g(I_{\task_{i-1}},\theta_g|\graphStruct_{\task_{i}}) $$
  $$\mathcal{Z}_{V_i} = g(V_i,\theta_g|\graphStruct_{\task_{i}})$$
   Update $f,g$ through training procedure (e.g. Back Propogation ) with loss function:\\
  $$ \loss(f(g(G_{\vertexSet_i},\theta_g),\theta_f), y_{\vertexSet_i}) + \mathrm{Reg}$$
  where
  \begin{equation}\label{eq:mmdreg}
      \mathrm{Reg} = \beta \hat{d_{\mathrm{MMD}^2}}(Z_{\mathrm{bef}}, \mathcal{Z}_{\mathrm{aft}}) + \alpha  \hat{d_{\mathrm{MMD}^2}}(Z_{\mathrm{bef}}, \mathcal{Z}_{V_i})
  \end{equation}
\end{algorithmic}
\end{algorithm}

\begin{figure}[!t]
\centering
\includegraphics[width=0.8\linewidth]{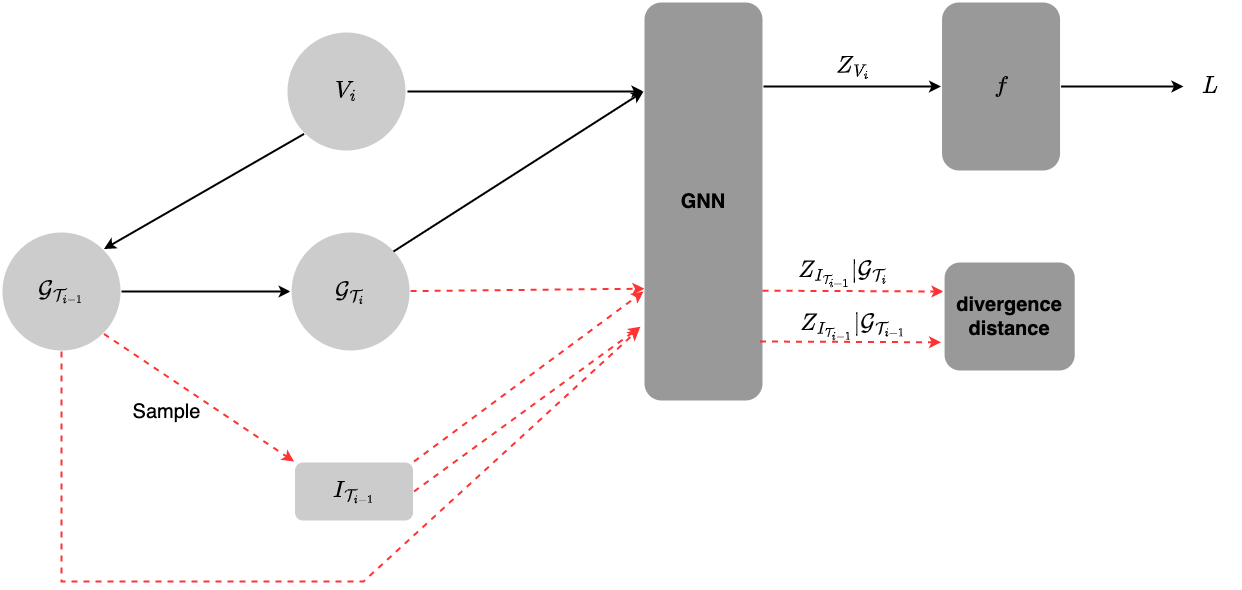}
\caption{Illustration of the structural shift risk mitigation}
\label{fig:train}
\end{figure}

\begin{figure}[!h]
\centering
\includegraphics[width=0.7\linewidth]{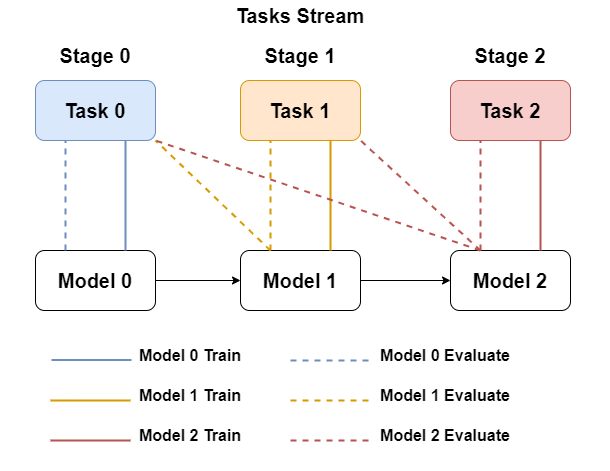}
\caption{Incremental Training in Development Phase. The figure above illustrates what task data the model used for training and evaluation.
}
\label{fig:it}
\end{figure}

\subsection{Space Complexity}
To compute the maximum mean discrepancy (MMD) given in Eq.~\eqref{eq:mmdreg} in the SSRM framework, we need the representation of $Z_{\mathrm{bef}}, \mathcal{Z}_{V_i}$ and $\mathcal{Z}_{\mathrm{aft}}$. We can obtain $\mathcal{Z}_{\mathrm{aft}}$ and $\mathcal{Z}_{V_i}$ from the updated graph at each task. The only extra storage cost incurred by the proposed method comes from $Z_{\mathrm{bef}}$, and the number of data required for $Z_{\mathrm{bef}}$ is determined by the sampling efficiency of MMD, that has been empirically shown to be efficient. Furthermore, as SSRM is not a stand-alone incremental learning framework but serves as a regularization to mitigate the effect of structural shift and boost the performance of existing incremental learning frameworks. As most existing incremental learning frameworks require storage or access to the previous data, SSRM could utilize this fact and make use of the data that have already been used by the incremental learning framework. In other words, for most incremental learning frameworks, such as ER-GNN, SSRM does not incur extra storage costs. 

\subsection{Computation Complexity}
First, recall that the equation we used for estimating the maximum mean discrepancy (MMD) is given as follows.
\begin{equation}\label{eq:emb_mmd2}
    \begin{split}
         \widehat{d}_{\mathrm{MMD}}^2 (X,Y) = \frac{1}{n_1^2} \sum_{i}^{n_1} \sum_{j}^{n_1} \kernel(x_i,x_j) 
        +\frac{1}{n_2^2} \sum_{i}^{n_2} \sum_{j}^{n_2} \kernel(y_i,y_j) - \frac{2}{n_1 n_2} \sum_{i}^{n_2} \sum_{j}^{n_1} \kernel(x_j,y_i),
    \end{split}
\end{equation}
Let $N = \max \{n_1, n_2\}$ where $n_1, n_2$ are the constants in Eq.~\ref{eq:emb_mmd2}.
It is obvious that the computation complexity of Eq.~\ref{eq:emb_mmd2} above is $\mathcal{O}(N^2)$. Note that subsampling is commonly employed for estimating MMD, i.e., $n_1$ and $n_2$ (hence $N$)  would be small and therefore, the cost of SSRM is small compared with the standard training time. Our empirical study finds that the extra computation cost from MMD is about ~5\% of the total computation time. There exists other techniques such as kernel approximation to make MMD computation even more scalable.

\section{Appendix: Additional Experiment Details}\label{appendix:exp_details}

\subsection{Hardware and Software}
All the experiments of this paper are conducted on the following machine

CPU: two Intel Xeon Gold 6230 2.1G, 20C/40T, 10.4GT/s, 27.5M Cache, Turbo, HT (125W) DDR4-2933

GPU: four NVIDIA Tesla V100 SXM2 32G GPU Accelerator for NV Link

Memory: 256GB (8 x 32GB) RDIMM, 3200MT/s, Dual Rank

OS: Ubuntu 18.04LTS

\subsection{Dataset and Processing}
\subsection{Dataset Description}
{\bf OGB-Arxiv.} The OGB-Arxiv dataset~\citep{ogb} is a benchmark dataset for node classification. It is constructed from the arXiv e-print repository, a popular platform for researchers to share their preprints. The graph structure is constructed by connecting papers that cite each other. The node features include the text of the paper's abstract, title, and its authors' names. Each node is assigned one of 40 classes, which corresponds to the paper's main subject area. 

{\bf Cora-Full.} The Cora-Full~\cite{cora_full} is a benchmark dataset for node classification. Similarly to OGB-Arxiv, it is a citation network consisting of 70 classes. 

{\bf Reddit.} The Reddit dataset~\citep{reddit} is a benchmark dataset for node classification that consists of posts and comments from the website Reddit.com. Each node represents a post or comment and each edge represents a reply relationship between posts or comments.

\begin{table}[h!]
  \centering
     \vspace{-3mm}
  \caption{Incremental learning settings for each dataset.
  }  
  {\large
   \setlength\tabcolsep{4pt}
    \begin{tabular}{c|ccc}
    \toprule
        Datasets  &        OGB-Arxiv  & Reddit & CoraFull          \\
            
    \midrule
           \# vertices & 169,343 & 227,853 & 19,793\\
           \# edges   & 1,166,243 & 114,615,892 & 130,622 \\
           \# class & 40 & 40 & 70\\
    \midrule
         \# tasks & 20 & 20 & 35\\
         \# vertices / \# task & 8,467 & 11,393 & 660\\
         \# edges / \# task & 58,312 & 5,730,794 & 4,354\\
    \bottomrule
    \end{tabular}%
   }
   \vspace{-3mm}
  \label{tab:data_description2}%
\end{table}%

\subsubsection{License}
The datasets used in this paper are curated from existing public data sources and follow their licenses. OGB-Arxiv is licensed under Open Data Commons Attribution License (ODC-BY).  Cora-Full dataset and the Reddit dataset  are two datasets built from publicly available sources (public papers and Reddit posts) without a license attached by the authors.

\subsubsection{Data Processing}
For the datasets, we remove the 41-th class of Reddit-CL, following closely in ~\cite{zhang2022cglb}. This aims to ensure an even number of classes for each dataset to be divided into a sequence of 2-class tasks. For all the datasets, the train-validation-test splitting ratios
are 60\%, 20\%, and 20\%. The train-validation-test
splitting is obtained by random sampling, therefore the performance may be slightly different with splittings from different rounds of random sampling.

\subsection{Hyperparameter of Incremental Learning Framework}
Table~\ref{tab:hyper} is the hyperparameter research space we adopt from ~\cite{zhang2022cglb}. 
\begin{table}[h!]
  \centering
  \caption{Incremental learning settings for each dataset.
  }  
  {\large
   \setlength\tabcolsep{4pt}
    \begin{tabular}{c|c}
    \hline
     GEM & memory\_strength:[0.05,0.5,5]; n\_memories:[10,100,1000]\\
     TWP & lambda\_1:[100,10000]; lambda\_t:[100,10000]; beta:[0.01,0.1] \\
     ER-GNN & budget:[10,100]; d:[0.05,0.5,5.0]; sampler:[CM]\\
     \hline
    \end{tabular}%
   }
  \label{tab:hyper}%
\end{table}%
\section{Appendix:Additional Experiment Result}\label{appendix:additional_exp_result}
In this section, we present additional experimental results.

\subsection{Difference between Inductive and Transductive}
In this subsection, we provide the additional results on the remaining dataset for the difference between inductive and transductive settings. The results are reported in Fig.~\ref{fig:addition1} and Fig.~\ref{fig:addition2}.

\begin{figure}[!h]
    \centering
    \subfigure[Reddit, Inductive]{
    \includegraphics[width=.4\textwidth]{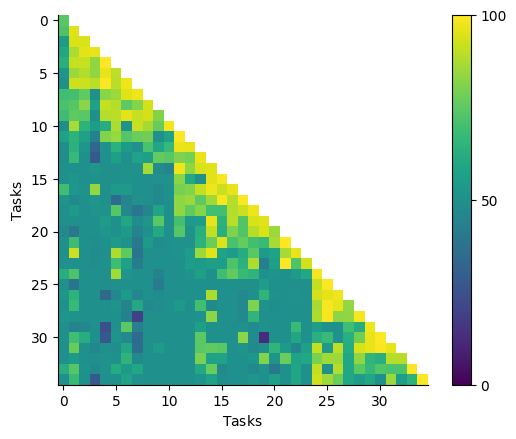}
    }
    \subfigure[Reddit, Transductive]{
    \includegraphics[width=.4\textwidth]{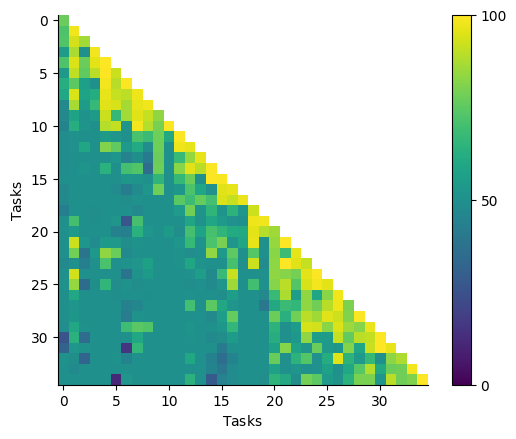}
    }
    \subfigure[CoraFull, Inductive]{
    \includegraphics[width=.4\textwidth]{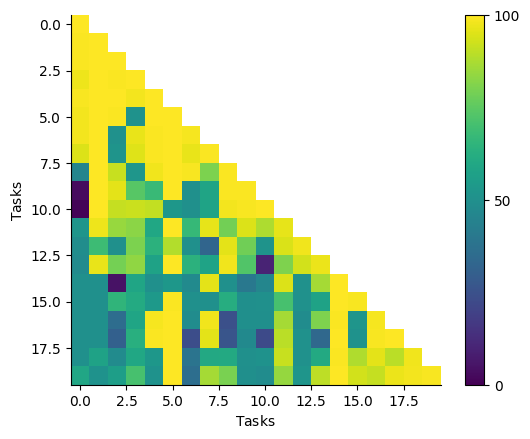}
    }
    \subfigure[CoraFull, Transductive]{
    \includegraphics[width=.4\textwidth]{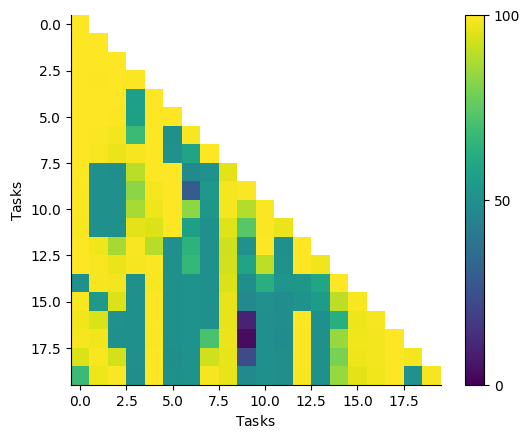}
    }
    \caption{Performance Matrix of Bare Model on Transductive and Inductive Setting on CoraFull and Reddit Datasets.}
    \label{fig:addition1}
\end{figure}

\begin{figure}[!h]
    \centering
    \subfigure[CoraFull, Inductive]{
    \includegraphics[width=.45\textwidth]{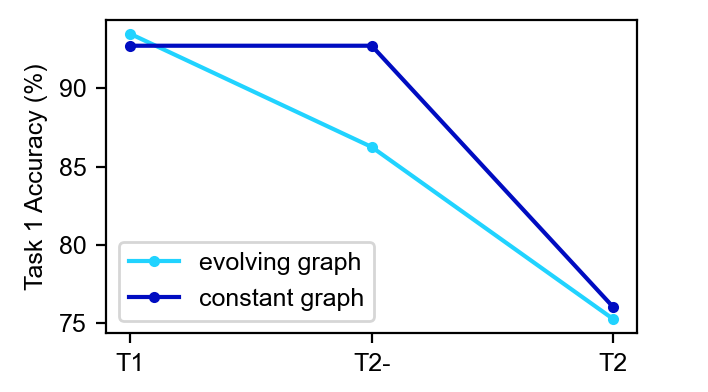}
    }
    \subfigure[Reddit, Transductive]{
    \includegraphics[width=.45\textwidth]{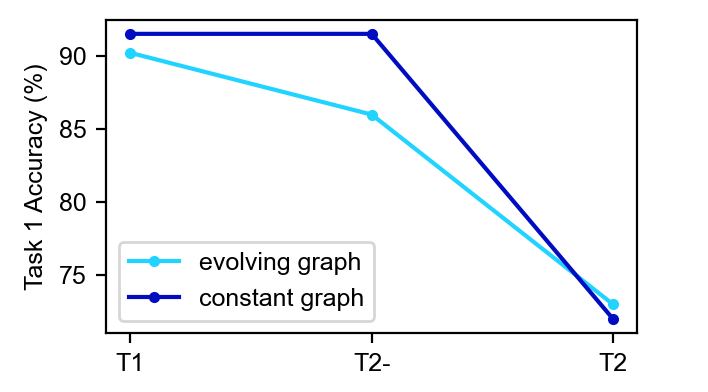}
    }
    \caption{Evolution of Task 1 performance.}
    \label{fig:addition2}
\end{figure}

\subsection{Incremental Learning w/w.o. SSRM}  
In this subsection, we provide the additional results on the remaining dataset for the effect of SSRM on the complete learning dynamic. The results are reported in Fig.~\ref{fig:addition3}.

\begin{figure}
    \centering
    \subfigure[Reddit w. SSRM]{
    \includegraphics[width=.4\textwidth]{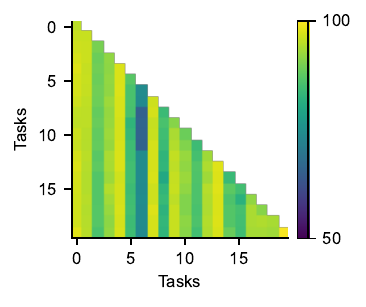}
    }
    \subfigure[Reddit w.o. SSRM]{
    \includegraphics[width=.4\textwidth]{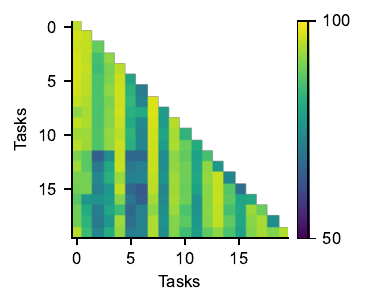}
    }
    \subfigure[CoraFull w. SSRM]{
    \includegraphics[width=.4\textwidth]{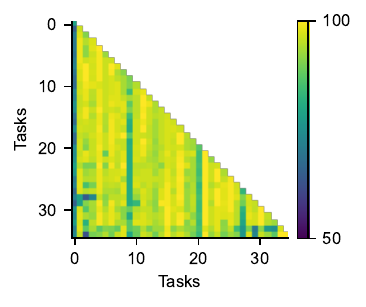}
    }
    \subfigure[CoraFull, w.o. SSRM]{
    \includegraphics[width=.4\textwidth]{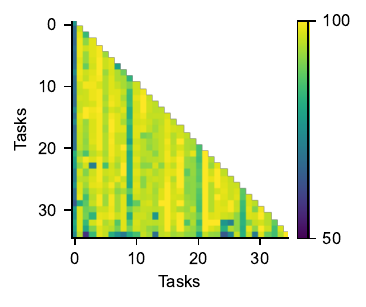}
    }
    \caption{Performance Matrix of ER-GNN with SSRM and without SSRM on CoraFull and Reddit Datasets.}
    \label{fig:addition3}
\end{figure}

\begin{figure}
    \centering
    \subfigure[Arxiv w. SSRM]{
    \includegraphics[width=.4\textwidth]{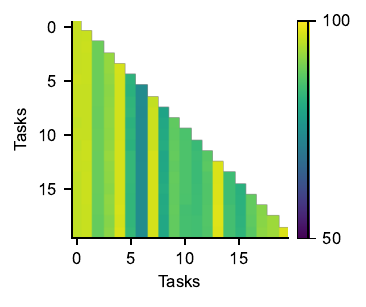}
    }
    \subfigure[Arxiv w.o. SSRM]{
    \includegraphics[width=.4\textwidth]{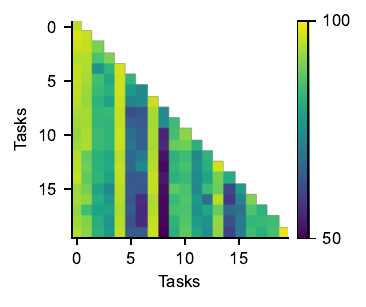}
    }
    \subfigure[Reddit w. SSRM]{
    \includegraphics[width=.4\textwidth]{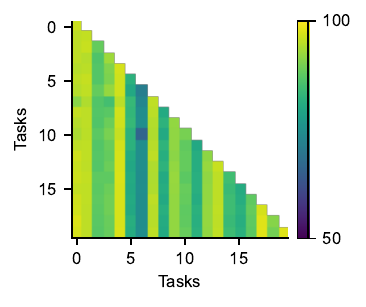}
    }
    \subfigure[Reddit w.o. SSRM]{
    \includegraphics[width=.4\textwidth]{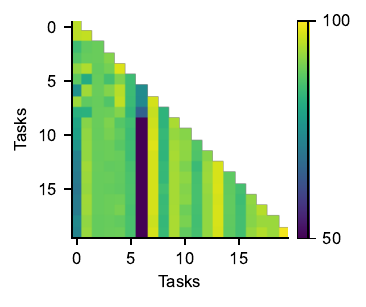}
    }
    \subfigure[CoraFull w. SSRM]{
    \includegraphics[width=.4\textwidth]{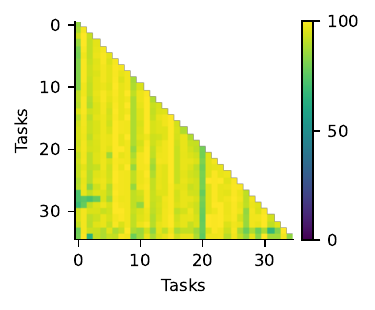}
    }
    \subfigure[CoraFull, w.o. SSRM]{
    \includegraphics[width=.4\textwidth]{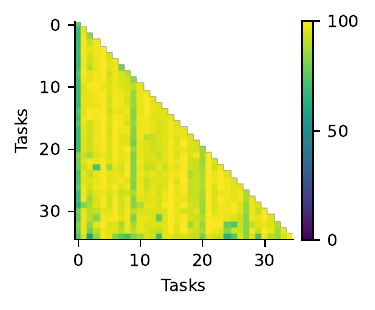}
    }
    \caption{Performance Matrix of TWP with SSRM and without SSRM on Arxiv, CoraFull and Reddit Datasets.}
    \label{fig:addition3}
\end{figure}
\section{Additional Related Work}\label{sec:related_work}
\subsection{Incremental Learning}
Incremental learning, also known as continual or lifelong learning, has gained increasing attention in recent years and has been extensively explored on Euclidean data. We refer readers to the surveys~\cite{cl_survey,cl_survey2,cl_nlp} for a more comprehensive review of these works. The primary challenge of incremental learning is to address the catastrophic forgetting problem, which refers to the drastic degradation in a model's performance on previous tasks after being trained on new tasks.

Existing approaches for addressing this problem can be broadly categorized into three types: regularization-based methods, experience-replay-based methods, and parameter-isolation-based methods. Regularization-based methods aim to maintain the model's performance on previous tasks by penalizing large changes in the model parameters~\cite{jung2016less,li2017learning,kirkpatrick2017overcoming,farajtabar2020orthogonal,saha2021gradient}. Parameter-isolation-based methods prevent drastic changes to the parameters that are important for previous tasks by continually introducing new parameters for new tasks ~\cite{rusu2016progressive,yoon2017lifelong,yoon2019scalable,wortsman2020supermasks,wu2019large}. Experience-replay-based methods select a set of representative data from previous tasks, which are used to retrain the model with the new task data to prevent forgetting ~\cite{lopez2017gradient,shin2017continual,aljundi2019gradient,caccia2020online,chrysakis2020online,knoblauch2020optimal}.

Our proposed method, SSRM, is a novel regularization-based technique that addresses the unique challenge of the structural shift in the inductive NGIL. Unlike existing regularization methods, which focus on minimizing the effect of updates from new tasks, SSRM aims to minimize the impact of the structural shift on the generalization of the model on the existing tasks by finding a latent space where the impact of the structural shift is minimized. This is achieved by minimizing the distance between the representations of vertices in the previous and current graph structures through the inclusion of a structural shift mitigation term in the learning objective. It is important to note that SSRM should not be used as a standalone method to overcome catastrophic forgetting but should be used as an additional regularizer to improve performance when there is a structural shift.

\subsection{Graph Incremental Learning}
Recently, there has been a surge of interest in GIL due to its practical significance in various applications~\cite{wang2022lifelong,xu2020graphsail,daruna2021continual,kou2020disentangle,ahrabian2021structure,cai2022multimodal,wang2020bridging,liu2021overcoming,zhang2021hierarchical,zhou2021overcoming,carta2021catastrophic,zhang2022cglb,kim2022dygrain,tan2022graph,su2023robust}. However, most existing works in NGIL focus on a transductive setting, where the entire graph structure is known before performing the task or where the inter-connection between different tasks is ignored. In contrast, the inductive NGIL problem, where the graph structure evolves as new tasks are introduced, is less explored and lacks a clear problem formulation and theoretical understanding. There is currently a gap in understanding the inductive NGIL problem, which our work aims to address. In this paper, we highlight the importance of addressing the structural shift and propose a novel regularization-based technique, SSRM, to mitigate the impact of the structural shift on the inductive NGIL problem. Our work lays down a solid foundation for future research in this area.

Finally, it is important to note that there is another area of research known as dynamic graph learning~\cite{galke2021lifelong,wang2020streaming,han2020graph,yu2018netwalk,nguyen2018continuous,ma2020streaming,feng2020incremental,bielak2022fildne,su2024pres}, which focuses on enabling GNNs to capture the changing graph structures. The goal of dynamic graph learning is to capture the temporal dynamics of the graph into the representation vectors, while having access to all previous information. In contrast, GIL addresses the problem of catastrophic forgetting, in which the model's performance on previous tasks degrades after learning new tasks. For evaluation, a dynamic graph learning algorithm is only tested on the latest data, while GIL models are also evaluated on past data. Therefore, dynamic graph learning and GIL are two independent research directions with different focuses and should be considered separately.

\end{document}